\def\eqref#1{equation~\ref{#1}}
\def\1{\bm{1}}
\def\rvtheta{{\mathbf{\theta}}}
\def\rva{{\mathbf{a}}}
\def\rvb{{\mathbf{b}}}
\def\rvc{{\mathbf{c}}}
\def\rvf{{\mathbf{f}}}
\def\rvr{{\mathbf{r}}}
\def\rvx{{\mathbf{x}}}
\def\rmA{{\mathbf{A}}}
\def\rmW{{\mathbf{W}}}
\def\vc{{\bm{c}}}
\def\ve{{\bm{e}}}
\def\vr{{\bm{r}}}
\DeclareMathAlphabet{\mathsfit}{\encodingdefault}{\sfdefault}{m}{sl}
\SetMathAlphabet{\mathsfit}{bold}{\encodingdefault}{\sfdefault}{bx}{n}
\def\sR{{\mathbb{R}}}
\providecommand{\E}{\mathbb{E}}
\def\cL{{\mathcal{L}}}
\def\cX{{\mathcal{X}}}
\def\Wenc{{\rmW_{\text{enc}}}}
\def\benc{{\rvb_{\text{enc}}}}
\def\Wdec{{\rmW_{\text{dec}}}}
\def\bdec{{\rvb_{\text{dec}}}}
\crefname{theorem}{theorem}{theorems}
\crefname{definition}{definition}{definitions}
\crefname{lemma}{lemma}{lemmas}
\crefname{proposition}{proposition}{propositions}
\crefname{algorithm}{algorithm}{algorithms}
\crefname{assumption}{assumption}{assumptions}
\newcommand{\norm}[2]{\left\Vert #2 \right\Vert_{#1}}
\newcommand{\cameraready}[1]{\textcolor{black}{#1}}
\newcommand{\rebuttal}[1]{\textcolor{black}{#1}}
\theoremstyle{plain}
\newtheorem{theorem}{Theorem}[section]
\newtheorem{proposition}[theorem]{Proposition}
\newtheorem{lemma}[theorem]{Lemma}
\theoremstyle{definition}
\newtheorem{assumption}[theorem]{Assumption}
\theoremstyle{remark}
\newtheorem{remark}[theorem]{Remark}
\theoremstyle{plain}
\newtheorem{aproposition}{Proposition}
\icmltitlerunning{Ensembling Sparse Autoencoders}
\begin{document}

\twocolumn[
  \icmltitle{Ensembling Sparse Autoencoders}



  \icmlsetsymbol{equal}{*}

  \begin{icmlauthorlist}
    \icmlauthor{Soham Gadgil}{sch,equal}
      \icmlauthor{Chris Lin}{sch,equal}
      \icmlauthor{Su-In Lee}{sch}
  \end{icmlauthorlist}

  \icmlaffiliation{sch}{Paul G. Allen School of Computer Science \& Engineering, University of Washington}
    \icmlcorrespondingauthor{Soham Gadgil, Chris Lin}{\{sgadgil, clin25, suinlee\}@cs.washington.edu}

  \icmlkeywords{Machine Learning, ICML}

  \vskip 0.3in
]



\printAffiliationsAndNotice{\icmlEqualContribution}

    \begin{abstract}
    Sparse autoencoders (SAEs) are used to decompose neural network activations into human-interpretable features. Typically, features learned by a single SAE are used for downstream applications. However, it has recently been shown that a single SAE captures only a limited subset of features that can be extracted from the activation space. Motivated by this limitation, we introduce and formalize SAE ensembles. Furthermore, we propose to ensemble multiple SAEs through \textit{naive bagging} and \textit{boosting}. In naive bagging, SAEs trained with different weight initializations are ensembled, whereas in boosting SAEs sequentially trained to minimize the residual error are ensembled. Theoretically, naive bagging and boosting are justified as approaches to reduce reconstruction error. Empirically, we evaluate our ensemble approaches with three settings of language models and SAE architectures. Our empirical results demonstrate that, compared to an expanded SAE that matches the number of features in the ensemble, ensembling SAEs improves the reconstruction of language model activations along with SAE stability. Additionally, on downstream tasks such as concept detection and spurious correlation removal, SAE ensembles achieve better performance, showing improved practical utility.
\end{abstract}

\section{Introduction}
\label{sec:intro}
Sparse autoencoders (SAEs) have been shown to decompose neural network activations,  often also described as embeddings or representations, into a high-dimensional and sparse space of human-interpretable features~\citep{cunningham2023sparse, gao2024scaling, lieberum2024gemma, rajamanoharan2024improving}. Recent work has focused on the application of SAEs to language models with interpretability use cases such as detecting concepts~\citep{gao2024scaling, movva2025sparse}, identifying internal mechanisms of model behaviors~\citep{marks2024sparse}, and steering model behaviors~\citep{farrell2024applying, marks2024sparse, o2024steering}. In practice, a single SAE is usually selected for downstream interpretability applications. However, recent work shows that each SAE trained on the same activations captures only a limited subset of the features that can be extracted from the activation space~\citep{fel2025archetypal, paulo2025sparse}. \rebuttal{Therefore, our main question is}: \textit{Can we leverage \rebuttal{multiple} SAEs to improve performance?} 

Leveraging multiple SAEs is well motivated by ensemble methods in supervised learning that utilize model variability to improve predictive performance. Examples \rebuttal{include} bagging (bootstrap aggregating) \rebuttal{that leverages variability due to randomness} ~\citep{breiman1996bagging, breiman2001random} and boosting \rebuttal{that leverages variability due to different optimization objectives}~\citep{chen2016xgboost, friedman2001greedy}. \rebuttal{We} propose to ensemble multiple SAEs and formalize SAE ensembles. Conceptually, SAE ensembles are defined as methods for combining the outputs of SAEs in the activation space. Nonetheless, we show that ensembling the outputs of SAEs corresponds to concatenating the SAE features and feature coefficients. We instantiate two approaches for ensembling SAEs (\Cref{fig:concept}). In \textit{naive bagging}, SAEs differing only in their weight initializations are ensembled. In \textit{boosting}, the ensemble aggregates SAEs that are iteratively trained to reconstruct the residual from previous iterations. In three settings of language models and SAE architectures, our empirical results show that naive bagging and boosting can lead to better reconstruction of language model activations, more interpretable features, and better stability. Finally, to demonstrate the practical utility of our ensemble methods, we apply them to the tasks of concept detection and spurious correlation removal, where ensembling multiple SAEs can achieve better performance than using only one SAE.

\begin{figure*}[t]
\centering
\includegraphics[page=1, width=0.9\textwidth]{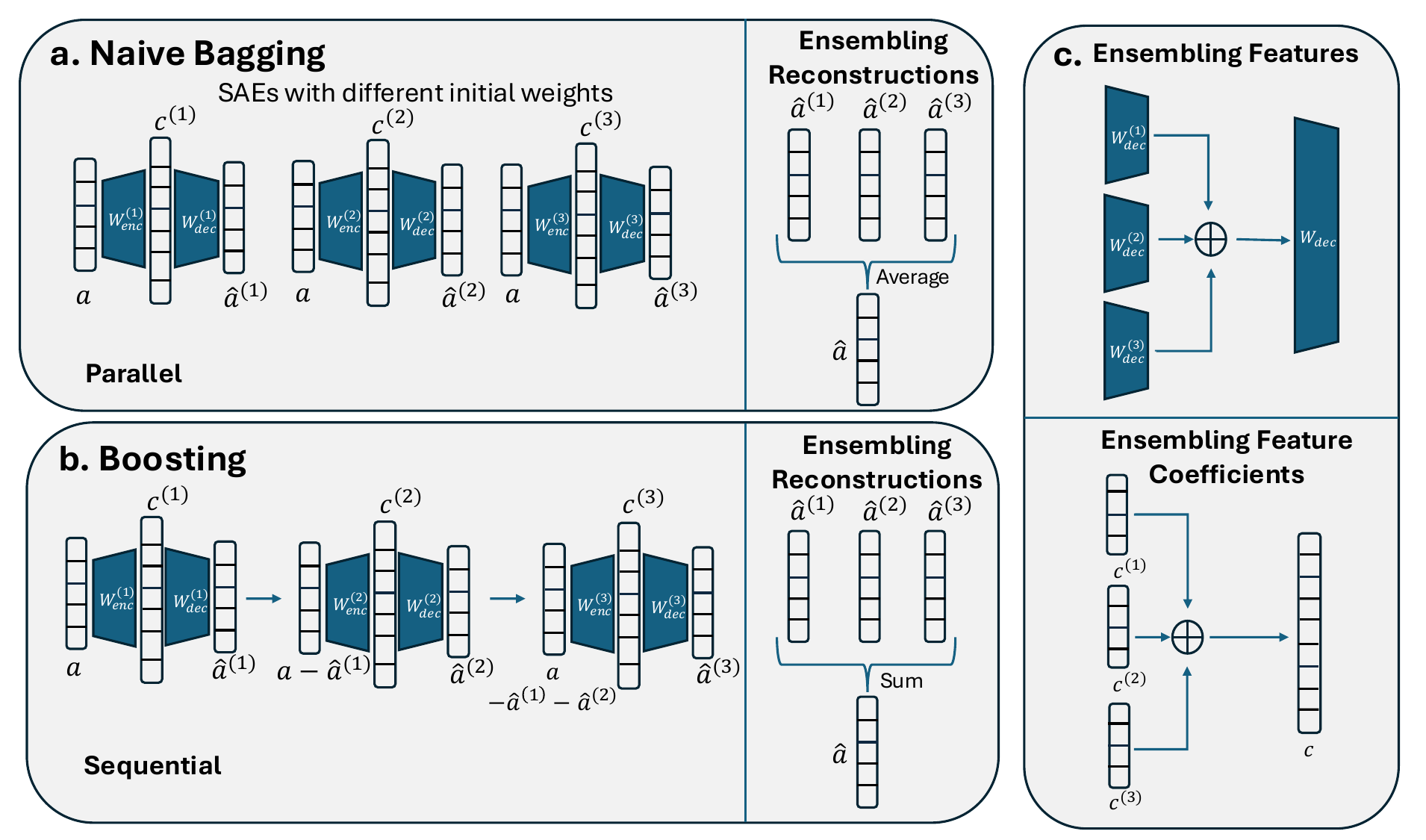}
\caption{Overview of the proposed SAE ensembling strategies. \textbf{a.} \textit{Naive Bagging} involves multiple SAEs with different weight initializations, which can be trained in parallel. The ensembled reconstruction is the average of reconstructions obtained from the individual SAEs. \textbf{b.} \textit{Boosting} involves sequential training of SAEs on the residual error left from the previous iterations. The ensembled reconstruction is the sum of the reconstructions from the individual SAEs. \textbf{c.}  For both approaches, ensembling the features and feature coefficients involves a concatenation.
} \label{fig:concept}
\end{figure*}

\section{Related Work}
\textbf{SAEs.} SAEs have emerged as a scalable and unsupervised approach for extracting human-interpretable features from neural network activations~\citep{fel2023holistic}, with recent work demonstrating their applications to language models~\citep{cunningham2023sparse,gao2024scaling, lieberum2024gemma}. An SAE decomposes neural network activations into sparse linear combinations of features, which are vectors with the same dimensionality as the original activations. Overall, features learned by an SAE can often be annotated with semantic interpretations~\citep{ cunningham2023sparse, rao2024discover}. Because the immediate goal of training an SAE is to decompose activations into sparse combinations of features, intrinsic metrics such as the explained variance of reconstructions and feature connectivity are used to evaluate SAEs~\citep{gao2024scaling, rajamanoharan2024improving, rajamanoharan2024jumping,fel2025archetypal}. At the same time, SAEs are usually trained with the end goal of interpreting language model behaviors, with downstream use cases such as concept detection~\citep{gao2024scaling, movva2025sparse}, mechanistic interpretability~\citep{marks2024sparse}, and model steering~\citep{farrell2024applying, marks2024sparse, o2024steering}. Therefore, metrics specific to downstream applications such as concept detection accuracy and the SHIFT score have been proposed~\citep{karvonen2025saebench}.

\cameraready{Recent work has also proposed new SAE architectures to improve scalability and domain adaptation. For example, Switch SAE trains multiple expert SAEs jointly, and one expert is selected during inference for computational efficiency~\cite{mudide2024efficient}. In contrast, in ensembling multiple SAEs are trained jointly and used jointly during inference. Another related work, SAE Boost, introduces a residual learning approach to correct domain-specific errors by training a secondary SAE to improve domain adaptation~\cite{koriagin2025teach}. However, SAE Boost performs a single residual correction step, unlike general boosting algorithms that could iteratively correct residual errors over multiple sequential rounds.}

\textbf{Variability of SAEs.} In general, the variability \rebuttal{between multiple} SAEs can come from several sources. First, SAEs with different architecture designs can learn different features. For example, it has been shown that the choice of SAE activation function corresponds to assumptions about the separability structure of the features to be learned~\citep{hindupur2025projecting}. The SAE size also has an impact on the types of features learned---a smaller SAE tends to learn high-level features, while a larger SAE tends to learn more specific features~\citep{chanin2024absorption}. Second, given a fixed architecture, SAEs with different training hyperparameters can also learn different features. For example, it has been found that lower learning rates can help reduce the number of dead features that rarely activate~\citep{gao2024scaling}. Finally, it has been shown that SAEs can learn different features even with the same architecture and hyperparameters~\citep{fel2025archetypal, paulo2025sparse}. For the scope of this paper, we focus on the variability and ensembling of SAEs with the same architecture and hyperparameters, consistent with classical ensembling in the supervised setting. Generally, our SAE ensemble approaches can be considered meta-algorithms compatible with any SAE architecture and hyperparameter configuration.

\label{sec:related_work}
\textbf{Model ensembling.} Ensemble methods have been applied to leverage model variability for improving performance, especially in supervised learning. In bagging (bootstrap aggregating), predictions from models trained with bootstrapped data subsets are aggregated~\citep{breiman1996bagging, breiman2001random}. Boosting algorithms train successive models by focusing on the errors made in the previous iterations~\citep{chen2016xgboost, friedman2001greedy}. Stacking is an alternative framework that combines predictions from models with different architectures and inductive biases~\citep{wolpert1992stacked}. More recently, it has been shown that averaging weights of models can lead to improved accuracy without additional inference time~\citep{wortsman2022model, wortsman2022robust}. For unsupervised learning, ensemble methods have mostly been applied to form consensus for clustering and anomaly detection~\citep{aggarwal2013outlier, domeniconi2009weighted, fern2004solving, ghosh2011cluster, zimek2014ensembles}. Motivated by the principle of ensembling, here we propose that SAEs can also be ensembled with respect to their outputs in the activation space. We theoretically show that ensembling SAE reconstructions corresponds to combining SAE features. We also demonstrate that ensembling SAEs can lead to improved intrinsic performance and practical utility when applied to language models.

Our work makes the following contributions. (1) We propose ensembling SAEs as a formal framework, showing that ensembling SAE reconstructions is equivalent to ensembling SAE features. (2) We instantiate two practical ensemble approaches, naive bagging and boosting, with theoretical justifications in relation to reconstruction performance. (3) We empirically demonstrate that ensembling multiple SAEs can improve performance in intrinsic metrics and downstream applications.

\section{Formalizing SAE Ensembles}
This section provides the notation used throughout this paper, the definition of an SAE ensemble, and a theoretical result showing that ensembling SAEs is equivalent to concatenating their features.

\subsection{Notation}\label{sec:notation}
In general, we consider a neural network that maps from a sample space $\cX$ to a $d$-dimensional activation space. An SAE is an autoencoder $g: \sR^d \rightarrow \sR^d$ that reconstructs neural network activations, with the following form:
\begin{align}\label{eq:sae_function_form}
    &g(\rva; \Wenc, \Wdec, \benc, \bdec) \nonumber \\ 
    &= \Wdec h(\Wenc \rva + \benc) + \bdec, 
\end{align}
where $\Wenc \in \sR^{k \times d}, \benc \in \sR^{k}, \Wdec \in \sR^{d \times k}, \bdec \in \sR^d$ are the SAE weights and biases, and $h: \sR^k \rightarrow \sR^k$ is an activation function such as the ReLU, JumpReLU, and TopK functions\rebuttal{~\citep{cunningham2023sparse, lieberum2024gemma, gao2024scaling}.\footnote{\rebuttal{Here, $k$ denotes the SAE dimension and does not denote the number of active features in TopK SAE.}}} Unlike conventional autoencoders, in an SAE we have $k > d$. Notably, the columns of the decoder matrix $\Wdec$ are considered features learned by the SAE. Particularly, let $\Wdec[:, i]$ denote the $i$th column of the decoder matrix. Then $\rvf_i = \Wdec[:, i] \in \sR^d$ is the $i$th feature of the SAE,\footnote{In the literature, the $\rvf_i$'s are associated with different terms such as feature directions and decoder vectors. Here, we follow ~\citet{cunningham2023sparse} and call them features for brevity.} for $i \in [k]$. Furthermore, elements in $\vc = h(\Wenc \rva + \benc) \in \sR^k$ are considered coefficients for the features. Overall, \Cref{eq:sae_function_form} can be rewritten to highlight that an SAE decomposes an activation into features, as follows:
\begin{equation}
    g(\rva; \Wenc, \Wdec, \benc, \bdec) = \sum_{i=1}^k \vc_i \rvf_i + \bdec.
\end{equation}
For conciseness, we let $\rvtheta = (\Wenc, \Wdec, \benc, \bdec)$ denote all the SAE parameters. Finally, we use $\hat{\rva} = g(\rva; \rvtheta)$ to denote the SAE reconstruction.

To train an SAE, a training set of activations $\{\rva^{(n)}\}_{n=1}^N$ are collected by passing a set of samples $\{\rvx^{(n)}\}_{n=1}^N$ through the neural network. Then the SAE parameters are trained to minimize the following empirical loss:
\begin{align}\label{eq:sae_loss_objective}
    &\cL_{\text{SAE}}\left(\{\rva^{(n)}\}_{n=1}^N; \rvtheta\right) = \nonumber \\
    &\frac{1}{N} \sum_{n=1}^N \left[\underbrace{\norm{2}{\rva^{(n)} - g\left(\rva^{(n)}; \rvtheta\right)}^2}_{\text{reconstruction loss}} + \underbrace{\lambda \norm{p}{\rvc^{(n)}}}_{\text{sparsity loss}}\right],
\end{align}
where $\rvc^{(n)} = h(\Wenc \rva^{(n)} + \benc)$ corresponds to the feature coefficients for the $n$th sample, and $\lambda \ge 0$ is the penalty coefficient for the sparsity loss. Note that $\lambda = 0$ for TopK SAEs because sparsity is enforced through the TopK activation function~\citep{gao2024scaling}.

\subsection{SAE Ensembles}\label{sec:ensembling_saes}
In this work we focus on ensembling SAEs with the same architecture. Specifically, given $J$ SAEs with model parameters $\theta^{(j)}$ for $j \in [J]$, an SAE ensemble has the form:
\begin{equation}\label{eq:ensembling_saes}
    \sum_{j=1}^J \alpha^{(j)} g\left(\cdot; \rvtheta^{(j)}\right)
\end{equation}
where $\alpha^{(j)} \ge 0$ is the ensemble weight for the $j$th SAE, and for generality the notation $g(\cdot; \theta^{(j)})$ indicates that each SAE can take arbitrary inputs in $\sR^d$. This weighted-sum formulation is similar to classical ensemble methods, where a weighted sum of outputs from base models is used to make a prediction~\citep{breiman1996bagging, friedman2001greedy}. With an SAE ensemble, the base model is now an SAE.

Different from classical ensembles, ~\Cref{eq:ensembling_saes} by itself does not fully specify an SAE ensemble, since SAE features and their coefficients are also critical components for downstream analyses. Interestingly, because the output of each SAE is a linear combination of its features, ensembling SAEs is equivalent to concatenating their feature coefficients and their decoder matrices (feature vectors). More formally, we have the following proposition, with the proof in Appendix~\ref{app:proofs}.
\begin{proposition}\label{prop:feature_concat}
    Suppose there are $J$ SAEs $g(\cdot; \rvtheta^{(1)}), ..., g(\cdot; \rvtheta^{(J)})$, with decoder matrices $\Wdec^{(1)}, ..., \Wdec^{(J)} \in \sR^{d \times k}$ and decoder biases $\bdec^{(1)}, ..., \bdec^{(J)} \in \sR^{d}$. For a given neural network activation $\rva \in \sR^d$, let $\rvc^{(1)}, ..., \rvc^{(J)} \in \sR^k$ denote the feature coefficients. Then ensembling the $J$ SAEs is equivalent to reconstructing $\rva$ with:
    \begin{equation}
        \hat{\rva} = \Wdec \rvc + \bdec = \sum_{i'=1}^{kJ} \rvc_{i'} \rvf_{i'} + \bdec,
    \end{equation}
    where
    \begin{equation}
        \rvc = 
            \begin{bmatrix}
                \alpha^{(1)} \rvc^{(1)} \\
                \vdots \\
                \alpha^{(J)} \rvc^{(J)}
            \end{bmatrix},
    \end{equation}
    \begin{equation}
        \Wdec =
            \begin{bmatrix}
                \Wdec^{(1)} \cdots \Wdec^{(J)}
            \end{bmatrix},
    \end{equation}
    \begin{equation}
        \bdec = \sum_{j=1}^{J} \alpha^{(j)} \bdec^{(j)},
    \end{equation}
    and $\rvf_{i'} = \Wdec[:, i']$, with $\rvc \in \sR^{kJ}, \Wdec \in \sR^{d \times kJ}, \bdec \in \sR^d$.
\end{proposition}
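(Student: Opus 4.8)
The plan is to prove the equivalence by direct substitution followed by block-matrix algebra, since the statement is essentially a reorganization of the weighted sum defining the ensemble in \Cref{eq:ensembling_saes}. First I would fix an arbitrary activation $\rva \in \sR^d$ and write the ensemble using the feature-decomposition form of each base SAE from the preliminaries, namely $g(\rva; \rvtheta^{(j)}) = \Wdec^{(j)} \rvc^{(j)} + \bdec^{(j)}$ with $\rvc^{(j)} = h(\Wenc^{(j)} \rva + \benc^{(j)})$. Substituting this into the weighted sum and distributing each scalar $\alpha^{(j)}$ over both the matrix--vector product and the bias yields
\begin{equation}
    \sum_{j=1}^J \alpha^{(j)} g(\rva; \rvtheta^{(j)}) = \sum_{j=1}^J \alpha^{(j)} \Wdec^{(j)} \rvc^{(j)} + \sum_{j=1}^J \alpha^{(j)} \bdec^{(j)}.
\end{equation}

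Next I would identify each of the two resulting terms with the claimed closed form. The bias term is immediately the stated $\bdec = \sum_{j=1}^J \alpha^{(j)} \bdec^{(j)}$. For the reconstruction term, I would invoke the block structure of matrix multiplication: with $\Wdec$ defined as the horizontal concatenation $[\Wdec^{(1)} \cdots \Wdec^{(J)}]$ and $\rvc$ as the vertical stacking of the scaled coefficients $\alpha^{(j)} \rvc^{(j)}$, the product $\Wdec \rvc$ expands block-by-block into exactly $\sum_{j=1}^J \Wdec^{(j)} (\alpha^{(j)} \rvc^{(j)})$, which matches the reconstruction term above. This establishes $\sum_{j=1}^J \alpha^{(j)} g(\rva; \rvtheta^{(j)}) = \Wdec \rvc + \bdec$.

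To obtain the feature-sum form, I would reindex the columns of the concatenated decoder by a single index $i' \in [kJ]$ and set $\rvf_{i'} = \Wdec[:, i']$, so that $\Wdec \rvc = \sum_{i'=1}^{kJ} \rvc_{i'} \rvf_{i'}$ by the same column expansion used to write a single SAE as $\sum_i \rvc_i \rvf_i$ in the preliminaries. Concatenating this chain of equalities gives the claimed reconstruction and completes the argument.

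I expect no genuine obstacle here, as the result is routine linear algebra that unfolds the definition of block matrix multiplication. The only points requiring care are dimensional bookkeeping---verifying $\Wdec^{(j)} \in \sR^{d \times k}$ so that the concatenation lies in $\sR^{d \times kJ}$, and each $\alpha^{(j)} \rvc^{(j)} \in \sR^k$ so that the stack lies in $\sR^{kJ}$---and the convention that each scalar weight $\alpha^{(j)}$ is absorbed into the coefficient vector rather than the feature matrix. I would keep the entire argument at a fixed but arbitrary $\rva$, noting that although each $\rvc^{(j)}$ depends on $\rva$ through its own encoder, this dependence is inert for the algebraic identity being established.
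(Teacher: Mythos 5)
Your proposal is correct and follows essentially the same route as the paper's proof: substitute the feature-decomposition form of each base SAE into the weighted sum defining the ensemble, then recognize the sum of matrix--vector products as a single product of the concatenated decoder with the stacked (scaled) coefficient vector. The only addition beyond the paper's argument is your explicit reindexing step for the column-sum form, which the paper leaves implicit.
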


\begin{remark}
    The ensemble weights $\{\alpha^{(j)}\}_{j=1}^J$ can be folded into either $\rvc$ or \rebuttal{$\Wdec$} for Proposition~\ref{prop:feature_concat} to hold. Since the columns of $\Wdec$ are often constrained to have unit norms to interpret the features as direction vectors~\citep{cunningham2023sparse, rajamanoharan2024improving}, the ensemble weights are folded into $\rvc$ to retain the feature norms.
\end{remark}

\section{Ensemble Methods for SAEs}
\label{sec:methods}
In this section we describe \textit{naive bagging} and \textit{boosting} as two approaches for ensembling SAEs.

\subsection{Naive Bagging}\label{sec:naive_bagging}
Variability of SAEs due to weight initialization is utilized in naive bagging, motivated by prior work showing that SAEs differing only in their initial weights can learn limited but different sets of features~\citep{fel2025archetypal, paulo2025sparse}. Note that we refer to this method as \textit{naive} because, unlike classical bagging, bootstrapped data subsets are not used. This is to ensure that each SAE is trained on the same dataset and isolate the effect of different initializations. Also, as SAEs are often trained on million- or even billion-scale datasets~\citep{gao2024scaling, lieberum2024gemma}, bootstrapping becomes impractical due to memory and storage overhead. Concretely, given $J$ SAEs with different initial weights, naive bagging gives the following ensembled SAE:
\begin{equation}\label{eq:naive_bagging}
    g_{\text{NB}}\left( \rva^{(*)}; \{\rvtheta^{(j)}\}_{j=1}^J \right) = \frac{1}{J} \sum_{j=1}^J g\left( \rva^{(*)}; \rvtheta^{(j)} \right)
\end{equation}
Conceptually, the uniform ensemble weight $\alpha^{(j)} = 1/J$ is motivated by considering naive bagging as a way to reduce reconstruction variance in the bias-variance decomposition (see Proposition \ref{prop:naive_bagging_reconstruction} in Appendix~\ref{app:proofs} for a formal justification).  


\subsection{Boosting}\label{sec:boosting}
Since SAEs with different initial weights still learn some overlapping features~\citep{paulo2025sparse}, naive bagging can result in redundant features in the ensemble. To address this redundancy, we propose a boosting-based ensemble strategy to encourage SAEs to capture different components of a given activation through sequential training. Starting from an initial SAE, each subsequent SAE is trained to capture the residual left from the previous iteration.
Concretely, the $j$th SAE is trained with the following loss:

\begin{align}\label{eq:boosting_loss}
    &\cL_{\text{Boost}}\left( \{\rva^{(n)}\}_{n=1}^N; \rvtheta^{(j)} \right) = \nonumber \\
    &\frac{1}{N} \sum_{n=1}^N \left[ \norm{2}{\rva^{(n, j)} - g\left( \rva^{(n, j)}; \rvtheta^{(j)} \right)}^2 + \lambda \norm{p}{\rvc^{(n, j)}} \right],
\end{align}

where
\begin{equation*}
    \rva^{(n, j)} =
    \begin{cases}
        \rva^{(n)}, & \text{if $j = 1$.} \\
        \rva^{(n)} - \sum_{\ell=1}^{j-1} g\left( \rva^{(n, \ell)}; \theta^{(\ell)} \right), & \text{otherwise.}
    \end{cases}
\end{equation*}
Here, the first iteration corresponds to training an initial SAE with the original activations. For $j > 1$, $\rva^{(n, j)}$ is the residual left from the $(j-1)$th iteration that the $j$th SAE should learn to reconstruct. It is worth noting that the regularization parameters $\lambda$ and $p$ remain the same throughout the training iterations. Intuitively, each SAE in boosting should learn features different from the previous SAEs by capturing the residual. \rebuttal{A recent work adapts the matching pursuit algorithm for SAEs, known as MP-SAEs, which are similar to boosting because residuals are used across multiple iterations~\citep{costa2025flat}. However, MP-SAE is not an ensemble method since the same set of features is shared across all iterations, while each boosting run learns a separate set of features}.

As another motivation, boosting can also lead to good reconstruction performance by bounding the bias term in the bias-variance decomposition (see Proposition~\ref{prop:boosting_reconstruction} in Appendix~\ref{app:proofs} for a formal justification). Overall, given $J$ SAEs trained with \Cref{eq:boosting_loss}, boosting gives the following ensembled SAE:
\begin{equation}\label{eq:boosting}
    g_{\text{Boost}}\left( \rva^{(*)}; \{\rvtheta^{(j)}\}_{j=1}^J \right) = \sum_{j=1}^J g\left( \rva^{(*, j)}; \theta^{(j)} \right).
\end{equation}
\rebuttal{Note that boosting is trained sequentially, and the forward pass also runs sequentially during inference.}



\section{Experiments}
\label{sec:experiments}
In this section, we evaluate our ensemble approaches with intrinsic evaluation metrics (\Cref{intrinsic}), assess the interpretability of the learned features (\Cref{sec:interpretability}), and demonstrate the utility of ensembling SAEs with two use cases: concept detection (\Cref{sec:downstream_concept_detection}) and spurious correlation removal (\Cref{sec:downstream_scr}).

\subsection{Baselines}
As a baseline for each experimental setting, we compare ensemble methods with an expanded SAE trained to have the same number of features as the ensembled SAEs. Since sparsity can have an impact on SAE performance for a given SAE size~\citep{gao2024scaling}, expanded SAEs are trained to have sparsity comparable to the ensembled SAEs, enabling a fair comparison. More details about the expanded SAE baseline are provided in Appendix~\ref{app:expanded_sae}.

\subsection{Evaluating Ensembled SAEs with Intrinsic Metrics}
\label{intrinsic}

\subsubsection{Setup}
\label{sec:setup}
We evaluate our ensemble approaches on SAEs trained with activations from three different language models: GELU-1L, Pythia-160M, and Gemma 2-2B, which represent a range of model sizes. Following prior work, ReLU, TopK, and JumpReLU SAEs are trained with the residual stream activations from GELU-1L~\citep{bricken2023monosemanticity}, layer 8 from Pythia-160M~\citep{gao2024scaling}, and layer 12 from Gemma 2-2B~\citep{lieberum2024gemma}, respectively.
Per-token activations are obtained from the Pile~\citep{gao2020pile} for each language model with the corresponding context size. For training the SAEs, we use 800 million tokens from a version of the Pile with copyrighted contents removed.\footnote{\url{https://huggingface.co/datasets/monology/pile-uncopyrighted}} A held-out test set of 7 million tokens is used for evaluation. Hyperparameters are swept for the base SAE, and hyperparameters giving an explained variance closest to 90\% are selected. This ensures that the SAEs being ensembled are practically usable to explain the activations. All SAEs are trained using the Adam optimizer~\citep{kingma2014adam}.\footnote{The code to implement and evaluate the ensembling methods has been submitted as part of the
supplementary material.} Additional details about the language models along with training times, \rebuttal{inference times,} and hyperparameter selection are provided in Appendix~\ref{app:implementation_details}. 


\subsubsection{Metrics}
\label{sec:intrinsic_metrics}
We evaluate different aspects of the ensembled SAEs using four intrinsic metrics: Explained Variance, Mean Squared Error (MSE), Connectivity, and Stability:

\textbf{Reconstruction performance.} We use two standard metrics, mean squared error (MSE) and explained variance, to evaluate the reconstruction of activations:
    \begin{equation*}
        \text{MSE} =\frac{1}{N}\sum_{n=1}^N \norm{2}{\rva^{(n)}-\hat{\rva}^{(n)}}^2 \text{, and}
    \end{equation*}
    \begin{equation*}
        \text{Explained Variance} =\frac{1}{d}\sum_{q=1}^d \left[
            1 
            - \frac{
                \sum_{n=1}^N (\rva^{(n)}_q -\hat{\rva}^{(n)}_q)^2
            }{
                \sum_{n=1}^{N} (\rva^{(n)}_q-\bar{\rva}_q)^2
            }
        \right],
    \end{equation*}
where $d$ is the activation dimensionality, and $\bar{\rva}_q$ is the mean activation for the $q$th dimension.





\textbf{Connectivity.} This metric, proposed in~\citet{fel2025archetypal}, measures the number of distinct pairs of SAE feature coefficients that are activated together across samples. It quantifies the diversity of the feature coefficients: 

\begin{equation*}
    \text{Connectivity}=1-\left(\frac{1}{m^2}\norm{0}{\mathbf{C}^\top \mathbf{C}}\right), 
\end{equation*}
where $\mathbf{C}\in\sR^{N\times m}$ is the matrix of feature coefficients across all samples, and here $\norm{0}{\cdot}$ counts the number of non-zero elements in a matrix.

\textbf{Stability.} This metric, adapted from~\citet{paulo2025sparse}, measures the maximum cosine similarity of the features that can be obtained across multiple runs of SAE training (with or without ensembling). Higher stability corresponds to the discovery of features that are similar across different runs. Note that this metric does not depend on the evaluation tokens. Given a total of $S$ training runs, the stability for the $s$th run is:
\begin{equation*}
    \text{Stability}= \frac{1}{m}\sum_{i=1}^m\max_{s'\in\left[S\right] \setminus s, j \in [m]}\langle {\rvf_i^{(s)}},\rvf_j^{(s')}\rangle.
\end{equation*}

Stability is computed across 5 runs and the confidence intervals are obtained by using each of the runs as the base run for computing stability with the others.


\subsubsection{Results} 
Figure \ref{fig:gemma_eval} illustrates how the number of SAEs in the ensemble affects intrinsic performance for both naive bagging and boosting on Gemma 2-2B. The first point in each plot represents the base SAE. Increasing the number of SAEs in the ensemble generally improves performance for most metrics and maintains the performance for the others. Comparing the two ensemble approaches, boosting outperforms naive bagging across all metrics except for stability. This is consistent with the theoretical justification that naive bagging reduces variance (\Cref{sec:naive_bagging}). On the other hand, since boosting aims for bias reduction (\Cref{sec:boosting}), it can learn more specific and low-level features, impacting stability. Results for GELU-1L and Pythia-160M are provided in Appendix~\ref{app:add_intrinsic_eval}, where similar trends hold. 

Detailed results for ensembles of 8 SAEs across all three language models are summarized in Table \ref{tab:ensemble_eval}. We ensemble with 8 SAEs as most of the metrics begin to plateau by then.  Compared to an expanded SAE, naive bagging (NB) performs better in stability while worse in the other intrinsic metrics such as the reconstruction metrics for GELU-1L and Pythia-160M. This is expected due to the stability-reconstruction tradeoff~\citep{fel2025archetypal}. However, naive bagging can also be applied to the expanded SAE as a way to gain both reconstruction performance and stability. Notably, the stability of the expanded SAE is typically less than half of the stability of ensembled SAEs, indicating that a larger SAE can result in unreliable features. More importantly, boosting outperforms an expanded SAE in the reconstruction metrics and stability, while having similar connectivity scores. This comparison highlights that the gains from ensembling are not just because the ensembled SAEs have more features. This comparison also shows that boosting is a strong alternative to expanding SAE size, especially for its better stability in applications that require interpretability tools to be reliable~\citep{fel2025archetypal, paulo2025sparse}. Overall, we find that ensembling performs better than an expanded SAE on the intrinsic metrics. As a sanity check, ensembling with 8 SAEs also performs better in all the intrinsic metrics compared to the base SAE (Supplementary Table \ref{stab:ensemble_eval}). 

\setlength{\tabcolsep}{1pt}
\begin{table}[h!]
\centering
\small
\caption{Intrinsic evaluation metrics for an expanded SAE, naive bagging (NB), and boosting (ensembling 8 SAEs). Means along with 95\% confidence intervals are reported across 5 runs.}
\vspace{0.3em}
\resizebox{0.49\textwidth}{!}{
\begin{tabular}{p{3cm}p{2cm}p{2.3cm}p{2.2cm}l}

\toprule
\textbf{Ensembling Method} & \textbf{Explained \hspace{2em} Variance ($\uparrow$)}            & \textbf{MSE ($\downarrow$)}                & \textbf{Connectivity ($\uparrow$)}             & \textbf{Stability ($\uparrow$)}      \\
\midrule
\multicolumn{5}{l}{\textbf{GELU-1L}}                 \\
\midrule
Expanded SAE      & 0.946 (0.0003) 	& 17.893 (0.137)	&	\textbf{0.959 (0.0003)} &	0.372 (0.0022)          \\
Ensembling (NB)     & 0.895 (0.0006)                   & 35.147 (0.210)                         & 0.307 (0.0009)           & \textbf{0.745 (0.0002)} \\
Ensembling (Boosting)          & \textbf{0.961 (0.0018)}  & \textbf{12.542 (0.589)}    & 0.945 (0.0004)  & 0.707 (0.0014)          \\
\midrule
\multicolumn{5}{l}{\textbf{Pythia-160M}}                                                                                                                                     \\
\midrule
Expanded SAE       & 0.987 (0.0041)	& 4.387 (1.486) 	& 0.978 (0.0006) &	0.204 (0.0006)         \\
Ensembling (NB)    & 0.929 (0.0000)           & 24.704 (0.019)             & 0.912 (0.0006)           & \textbf{0.731 (0.0017)} \\
Ensembling (Boosting)          & \textbf{0.998 (0.0021)} & \textbf{0.845 (0.547)}   & \textbf{0.986 (0.0004)}  & 0.680 (0.0025)          \\
\midrule
\multicolumn{5}{l}{\textbf{Gemma 2-2B}}                                                                                                                                      \\
\midrule
Expanded SAE      & 0.948 (0.0012) &	472.330 (10.759)	& \textbf{0.993 (0.0003)} & 0.268 (0.0021)          \\
Ensembling (NB)     & 0.974 (0.0006) & 234.128 (6.228) & 0.769 (0.0007)  & \textbf{0.633 (0.0014)} \\
Ensembling (Boosting)          & \textbf{0.995 (0.0003)} & \textbf{46.538 (2.923)} & 0.989 (0.0003) & 0.583 (0.0009)   \\  
\bottomrule
\end{tabular}
}

\label{tab:ensemble_eval}
\end{table}

\begin{figure}[h!]
\centering
\includegraphics[width=0.49\textwidth]{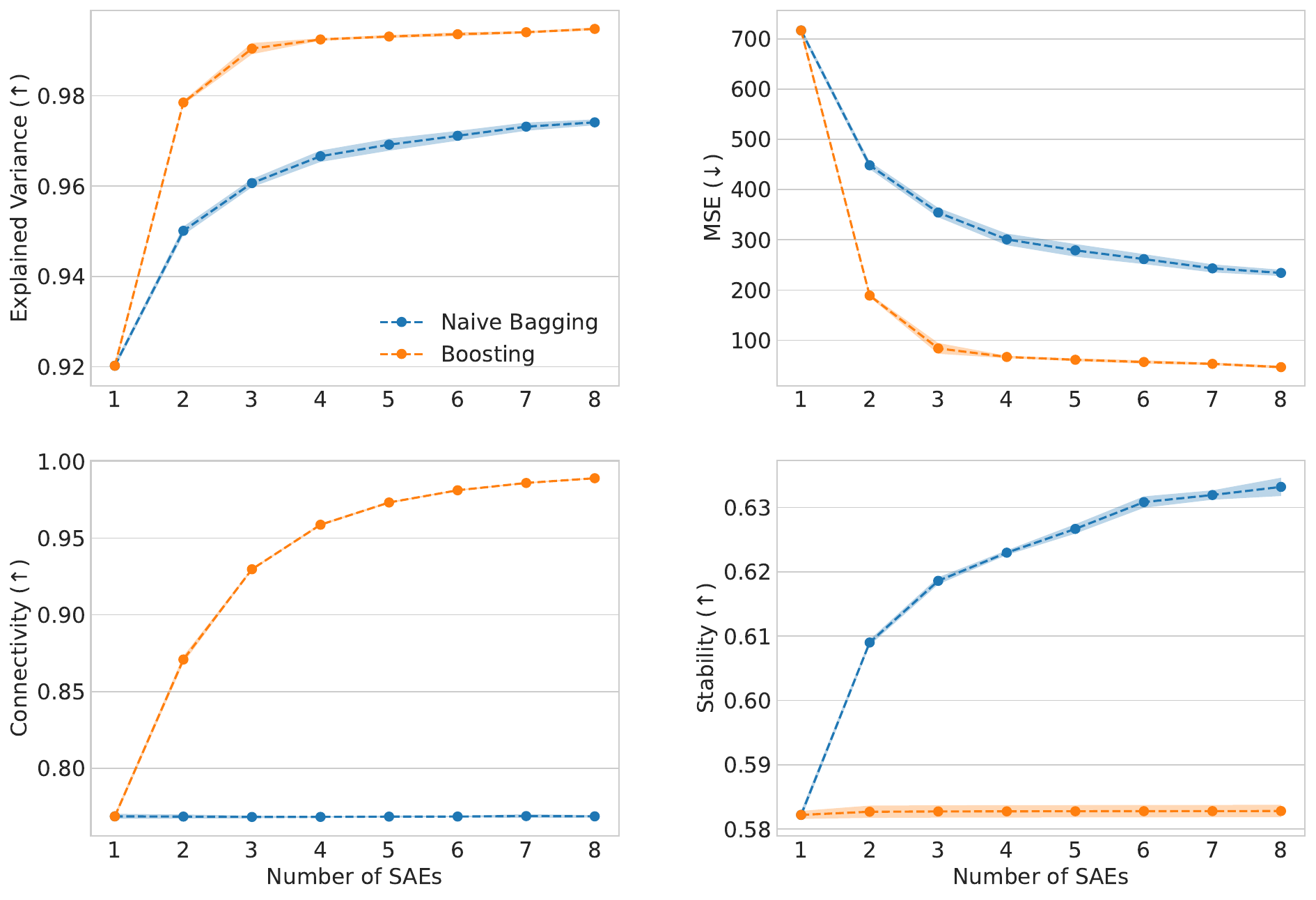}
\caption{\rebuttal{Ablation study to investigate the effect} of the number of SAEs in the ensemble for naive bagging and boosting on the intrinsic evaluation metrics for Gemma 2-2B (Layer 12). The shaded regions indicate 95\% confidence intervals across 5 different experiment runs. For naive bagging, the different experiment runs correspond to different sets of initial weights.}
\label{fig:gemma_eval}
\end{figure}

\subsection{Interpretability of Features Learned by Ensembled SAEs}
\label{sec:interpretability}
The sparsity of an SAE's feature coefficients, measured by the $L_0$ norm, is often considered a proxy for the interpretability of the SAE's features. In an SAE ensemble, each constituent SAE is trained with the same sparsity constraint and therefore remains individually sparse. However, a potential concern is that the ensemble as a whole has a large aggregate $L_0$ because feature coefficients from all constituent SAEs are concatenated, which could be considered as evidence that the learned features are not interpretable. To directly assess this concern, we evaluate the interpretability of features learned by SAE ensembles using language models through automated interpretability (AutoInterp)~\citep{paulo2410automatically}. As shown in \Cref{tab:autointerp}, the ensembled SAEs achieve higher average AutoInterp scores than the expanded SAEs. Moreover, the ensembled SAEs have similar or better AutoInterp scores than the corresponding base SAEs, despite having larger aggregate $L_0$ norms (Supplementary Table \ref{stab:autointerp}). Taken together, these results suggest that aggregate $L_0$ is not a reliable proxy for interpretability in the context of SAE ensembles, adding to growing evidence in the SAE literature that sparsity alone does not fully capture feature interpretability~\citep{sharkey2025open}.

\cameraready{We also use AutoInterp to perform a qualitative analysis of the features learned by the first and last SAEs for both the ensembling techniques (\Cref{tab:sae_concepts}). For naive bagging, we observe that the features learned by both the first and the last SAEs are at a similar hierarchical level (applicable to a broad category of examples). In the case of Pythia-70M, the features are semantically similar as well. For boosting, the first and the last SAEs tend to learn features at different conceptual levels. The first SAE captures more broad, high-level features, while the last SAE captures specific features applicable to a smaller set of samples. For example, in the case of Pythia-160M, the first SAE learns a feature for words associated with programming concepts while the last SAE learns specific terms like `tokens', `syntax error', and `Java'. These results suggest that boosting can improve reconstruction performance by focusing on more specific features in the later SAEs.}

\begin{table*}[t]
\centering
\caption{Comparison of concepts identified by the First SAE and Last SAE across methods and models.}
\begin{tabularx}{\textwidth}{p{2.4cm} @{\hspace{10pt}} X @{\hspace{10pt}} X}
\toprule
\textbf{Model} & \textbf{First SAE} & \textbf{Last SAE} \\
\midrule

\multicolumn{3}{l}{\textbf{Ensembling (NB)}} \\
\midrule
\addlinespace[2pt]

GELU-1L 
& concepts related to personal conflict, emotional turmoil, and family dynamics 
& terms related to development, planning, management, and constructions \\

Pythia-70M 
& legal terms related to defense and prosecution in court contexts 
& phrases related to legal arguments and courtroom procedures \\

Pythia-160M 
& words related to specific concepts or formal elements in academic writing 
& questions expressing confusion or curiosity about reasons or explanations \\

Gemma-2-2B 
& words related to contacting or communication 
& expressions of surprise, awe, and disbelief \\

\midrule

\multicolumn{3}{l}{\textbf{Ensembling (Boosting)}} \\
\midrule
\addlinespace[2pt]

GELU-1L 
& the word `where' indicating locations or contexts in various situations 
& technical terms and symbols in equations and scientific discussions \\

Pythia-70M 
& the word `your' and related second-person pronouns indicating personal involvement or responsibility 
& statistical significance and associated numerical representation in research and analysis \\

Pythia-160M 
& words and substrings associated with programming concepts and technical terms 
& text related to technology and programming, including specific terms like ``tokens,'' ``syntax error,'' and ``Java'' \\

Gemma-2-2B 
& technical terms related to software settings and programming elements 
& specific numerical references and words related to documentation, permissions, and experimental contexts \\

\bottomrule
\end{tabularx}
\label{tab:sae_concepts}
\end{table*}

\begin{table}[h!]
\centering
\caption{AutoInterp scores for an expanded SAE, naive bagging (NB), and boosting (ensembling 8 SAEs). Means along with 95\% confidence intervals are reported across 5 experiment runs.}
\resizebox{0.49\textwidth}{!}{
\begin{tabular}{p{3.3cm}>{\centering}p{2cm}>{\centering}p{2cm}>{\centering\arraybackslash}p{2cm}>{\centering\arraybackslash}p{2cm}} \\
\toprule
\textbf{}      & \textbf{GELU-1L} & \textbf{Pythia-160M} & \textbf{Gemma 2-2B} \\
\midrule
Expanded SAE   &	0.714 (0.051) &	0.852 (0.006) &	0.805 (0.002) \\
Ensembling (NB)                                    & 0.738 (0.147)                     & \textbf{0.857(0.004)}  &   0.799 (0.008)           \\
Ensembling (Boosting)        & \textbf{0.863 (0.005)}                & 0.852 (0.018) & \textbf{0.814 (0.002)} \\
\bottomrule
\end{tabular}
}
\label{tab:autointerp}
\end{table}

\subsection{Use Case 1: Concept Detection}
\label{sec:downstream_concept_detection}
Interpretability use cases of SAEs such as debiasing, understanding sparse circuits, and hypothesis generation often require individual SAE features to correspond to semantic concepts~\citep{cunningham2023sparse, marks2024sparse, movva2025sparse}. Therefore, here we apply our ensemble approaches to detect semantic concepts across a range of domains. Specifically, per-token activations are encoded using an ensembled SAE, and mean-pooling is applied to obtain a sequence-level embedding. The SAE feature having the maximum mean difference between samples with and without the concept in the training set is selected to train a logistic regression classifier. Finally, accuracy on a held-out test set is used to evaluate the concept detection performance. We note that this evaluation procedure follows prior work~\citep{gao2024scaling, karvonen2025saebench}.

 \textbf{Setup.} We train a ReLU SAE as the base SAE on the residual stream activations from layer 4 of Pythia-70M, with 100 million tokens from the Pile~\citep{gao2020pile}. This setting is chosen since it has been used for concept-level tasks~\citep{karvonen2024evaluating, marks2024sparse}. Our concept detection use case encompasses four datasets: \textbf{(1) Amazon Review (Sentiment):} classifying the sentiment of the review (1 vs. 5 stars), \textbf{(2) GitHub Code:} identifying the coding language from source code, \textbf{(3) AG News:} classifying news articles by topics, and \textbf{(4) European Parliament:} detecting the language of a document.

\textbf{Results.} Table \ref{tab:concept_detection} illustrates the results of the concept detection task for our ensemble approaches (with 8 SAEs in each ensemble). Comparing the two ensemble approaches, naive bagging generally performs better than boosting \rebuttal{in terms of the mean performance}. One \rebuttal{possible} reason for the higher performance of naive bagging could be that it identifies features at a conceptual hierarchy which is suitable for this task, while boosting can potentially identify \rebuttal{low-level} features \rebuttal{useful for the training set but do not generalize to the test set}. However, multiple specific features can be combined to detect a more general concept. Indeed, boosting can perform better than naive bagging when the top 5 concept-associated features are considered instead of using only the top feature (Supplementary Table~\ref{stab:concept_detection_top_5}). Therefore, naive bagging should be used for applications where each concept is mapped to only one SAE feature, whereas boosting excels when each concept is mapped to multiple SAE features. \cameraready{In particular, boosting outperforms naive bagging starting when the top 2 concept-associated features are considered for concept detection (\Cref{stab:concept_detection_top_2}).} \rebuttal{Overall, considering the means along with the confidence intervals, we observe that ensembling performs slightly better than an expanded SAE across all the concept detection tasks (\Cref{tab:concept_detection})}. As a sanity check, ensembling also performs better than the base SAE (Supplementary Table \ref{stab:concept_detection}). \rebuttal{For completeness, evaluation of SAEs trained for the other language models on the concept detection tasks is shown in Appendix \ref{app:downstream}}.

\begin{table}[h!]
\centering
\caption{Test accuracy of the logistic regression classifier for the top concept-associated feature across four concept detection tasks for ensembles with 8 SAEs. Means along with 95\% confidence intervals are reported across 5 experiment runs.}
\vspace{0.3em}

\resizebox{\columnwidth}{!}{
\begin{tabular}{p{3.3cm}>{\centering}p{2cm}>{\centering}p{2cm}>{\centering\arraybackslash}p{2cm}>{\centering\arraybackslash}p{2cm}}
\toprule
\textbf{}      & \textbf{Amazon Review \hspace{2em} (Sentiment)} & \textbf{GitHub Code \hspace{4em} (Language)} & \textbf{AG News \hspace{2em}(Topic)} & \textbf{European Parliament \hspace{2em}(Language)} \\
\midrule
Expanded SAE   & 	0.600 (0.032) &	0.682 (0.025) &	0.746 (0.021) &	0.942 (0.009) \\
Ensembling (NB)                 & \textbf{0.631 (0.036)}                     & \textbf{0.715 (0.012)}                     & 0.742 (0.037)   &   \textbf{0.943 (0.016)}           \\
Ensembling (Boosting)        & 0.624 (0.037)            & 0.682 (0.021)            & \textbf{0.759 (0.021)} & 0.920 (0.015) \\
\bottomrule
\end{tabular}
}
\label{tab:concept_detection}
\end{table}

\setlength{\tabcolsep}{5pt}
\begin{table}[h!]
\centering
\caption{Test accuracy of the logistic regression classifier for the top-2 concept-associated feature across four concept detection tasks for Pythia-70M. SAE Ensembles consist of 8 SAEs. Means along with 95\% confidence intervals are reported across 5 experiment runs.}   
\resizebox{\columnwidth}{!}{
\begin{tabular}{p{3.3cm}>{\centering}p{2cm}>{\centering}p{2cm}>{\centering\arraybackslash}p{2cm}>{\centering\arraybackslash}p{2cm}}
\toprule
\textbf{}    & \textbf{Amazon Review \hspace{2em} (Sentiment)} & \textbf{GitHub Code \hspace{4em} (Language)} & \textbf{AG News \hspace{2em}(Topic)} & \textbf{European Parliament \hspace{2em}(Language)} \\
\midrule

Ensembling (NB) & 0.655 (0.035)            & 0.718 (0.018)                    & 0.765 (0.041)   &   0.943 (0.011)          \\
Ensembling (Boosting)   & \textbf{0.658 (0.024)}            & \textbf{0.725 (0.016)}            & \textbf{0.803 (0.019)} & \textbf{0.963 (0.010)} \\
\bottomrule
\end{tabular}
}

\label{stab:concept_detection_top_2}
\end{table}

\subsection{Use Case 2: Spurious Correlation Removal}
\label{sec:downstream_scr}
Neural networks have been previously shown to encode spurious correlations between non-essential input signals (e.g. image background) and the target label, which can negatively impact their generalization performance, robustness, and trustworthiness~\citep{degrave2021ai, ye2024spurious}. Such biases can get exacerbated in more complex networks like large language models~\citep{kotek2023gender, navigli2023biases}. Motivated by this, we consider the task of spurious correlation removal (SCR), as proposed in~\citet{karvonen2024evaluating}. The evaluation procedure here follows~\citet{karvonen2025saebench} and is an automated version of Sparse Human-Interpretable Feature Trimming (SHIFT) by~\citet{marks2024sparse}.

\textbf{Setup.} The goal of SCR is to identify specific SAE features for the spurious signal and debias a classifier by ablating those features. Here we use the Bias in Bios dataset~\citep{de2019bias}, which maps professional biographies to profession and gender. First, the dataset is filtered for a pair of professions (e.g. professor and nurse) and then it is partitioned into two sets: one which is balanced in terms of profession and gender, and the other with biased gender association for a particular profession (e.g. male professors and female nurses). Then, a linear classifier $C_b$ is trained on the biased set using the activations from a language model. The goal is to debias this classifier using the features identified by the SAE to improve the accuracy on classifying profession in an unbiased held-out set. 

For debiasing the classifier $C_b$, a set of top $L$ SAE features is first identified based on their probe attribution scores for a probe trained to predict the spurious signal (i.e. gender)~\citep{karvonen2025saebench}. We use the same base SAE setup as the one used in the concept detection task -- a ReLU SAE trained using Pythia-70M activations with 100 million tokens. Then, a modified classifier $C_m$ is trained after removing the spurious signal by zero-ablating the $L$ SAE features. The predictive performance of the modified classifier $C_m$ on profession for the held-out, balanced dataset indicates the SAE quality. Following~\citet{karvonen2025saebench}, the normalized evaluation score $S_{\text{SHIFT}}$ is defined as: 

\begin{equation*}
    S_{\text{SHIFT}}=\frac{A_{\text{abl}}-A_{\text{base}}}{A_{\text{oracle}}-A_{\text{base}}
    },
\end{equation*}
where $A_{\text{abl}}$ is the accuracy for $C_m$, $A_{\text{base}}$ is the accuracy for $C_b$, and $A_{\text{oracle}}$ is the oracle accuracy with a classifier trained on a balanced dataset. It is worth noting that $A_{\text{base}}$ and $A_{\text{oracle}}$ do not depend on SAEs.


\textbf{Results.} The (ensembled) 
SAEs from \Cref{sec:downstream_concept_detection} are used here, with $L = 20$ features selected, following prior work~\citep{karvonen2025saebench}. Table \ref{tab:scr} shows the performance of our ensemble approaches for the SCR task across four pairs of profession, with the first profession biased towards males and the second towards females. Comparing the ensemble approaches, naive bagging does not perform as well as the baselines, which could be because in naive bagging there are more than $L$ similar features related to the spurious signal, and all of those features need to be ablated to observe an improved $A_{\text{abl}}$. In contrast, boosting outperforms naive bagging and the baselines, suggesting that it is more effective in isolating and removing gender-related features. Overall, these results show that ensembling can outperform the expanded SAE across all pairs of professions. As a sanity check, boosting also performs better than the base SAE across all profession pairs, while naive bagging does not perform as well (Supplementary Table \ref{stab:scr}). Similar trends are observed as the number of top gender-related features $L$ is further increased (Supplementary Figure~\ref{sfig:scr_vs_topk_ablation}). \rebuttal{For completeness, evaluation of additional SAEs on the SCR task is shown in Appendix \ref{app:downstream}.}

\begin{table}[h!]
\centering
\caption{$S_\text{SHIFT}$ scores for the spurious correlation removal task with the top 20 gender-related features identified across four pairs of professions for ensembles with 8 SAEs. Means along with 95\% confidence intervals are reported across 5 experiment runs.}
\vspace{0.3em}
\resizebox{\columnwidth}{!}{
\begin{tabular}{l>{\centering}p{2cm}>{\centering}p{2cm}>{\centering}p{2.6cm}>{\centering\arraybackslash}p{2cm}}
\toprule
\textbf{}     & \textbf{Professor \hspace{2em} vs. Nurse} & \textbf{Architect \hspace{2em} vs. Journalist} & \textbf{Surgeon \hspace{4em} vs. Psychologist} & \textbf{Attorney \hspace{2em} vs. Teacher} \\
\midrule
Expanded SAE  & 0.047 (0.014) &	0.006 (0.005) &	0.037 (0.009) &	0.021 (0.007) \\ 
Ensembling (NB) & 0.021 (0.003)                & 0.004 (0.001)                     & 0.014 (0.002)                     & 0.003 (0.005)                 \\
Ensembling (Boosting)      & \textbf{0.066 (0.016)}       & \textbf{0.013 (0.011)}            & \textbf{0.045 (0.014)}            & \textbf{0.029 (0.003)} \\
\bottomrule
\end{tabular}
}
\label{tab:scr}
\end{table}

\subsection{Evaluation with additional downstream metrics}
The use cases in \Cref{sec:downstream_concept_detection} and \Cref{sec:downstream_scr} are adapted from SAEBench~\cite{karvonen2025saebench}, which implements additional metrics to evaluate SAEs. We perform an extensive evaluation on the other metrics from SAEBench, including the RAVEL  (Resolving Attribute-Value Entanglements in Language Models) score, the CE (Cross Entropy) loss score, and TPP (Targeted Probe Perturbation). For all the metrics, we use the default hyperparameter values provided in the code repository.\footnote{\url{https://github.com/adamkarvonen/SAEBench/tree/main}} We observe that our ensembling approaches are able to outperform or perform similarly to the baselines (Appendix \ref{app:saebench}) even on additional downstream metrics. This highlights that the performance improvements gained by ensembling are not limited to the use cases of concept detection and spurious correlation removal, but can be generalized to multiple other downstream tasks.

\section{Discussion}
\label{sec:discussion}
In this work, we propose and formalize ensembling SAEs as a way to improve performance by leveraging the feature variability of SAEs with the same architecture and hyperparameters. We instantiate two ensembling approaches, \textit{naive bagging} and \textit{boosting}. Theoretically, we justify both approaches as ways to improve reconstruction and
show that ensembling in the output space of SAEs is equivalent to concatenation in the feature space. Empirically, we show that ensembling can improve intrinsic performance, leading to better reconstruction of language model activations, more diverse feature coefficients, and improved stability. We also demonstrate the practical utility of our ensembling approaches through quantitative validation on two downstream use cases, where ensembling can also lead to performance improvement. 

Our ensemble approaches do come with some limitations. Both naive bagging and boosting are computationally more expensive than training the base SAE, since they require multiple SAEs to be trained. While this can be run in parallel for naive bagging, boosting has to be run sequentially. While ensembling performs better than the base SAE across all intrinsic metrics, this does not always translate to better downstream performance. For example, naive bagging could result in redundant features, causing a performance drop for tasks where multiple features are selected for ablation. On the other hand, boosting could learn features that are too specific, leading to lower performance for detecting high-level concepts with individual features. Thus, different ensemble approaches should be used based on the specific goals and procedures of downstream applications. \rebuttal{Furthermore, similar to ensemble methods in supervised learning, ensemble approaches for SAEs can incur higher model complexity compared to the base SAE. However, individual SAEs in an ensemble remain sparse, and the learned features are interpretable based on downstream evaluations.}  

As a framework, SAE ensembling can be considered a meta-algorithm, which can be extended to different settings. We scope this work to focus on SAEs with the same architecture\rebuttal{, hyperparameters, and training data}, but future directions can consider ensembling (stacking) different \rebuttal{SAE} architectures \rebuttal{such as Matryoshka SAEs~\citep{bussmann2025learning} and Switch SAEs~\citep{mudide2024efficient}, or SAEs trained on data from different subdomains~\citep{muhamed2025decoding}}. Beyond language models, ensembling can also be used for SAEs trained on activations from models of other input domains (e.g. activations from vision models). Finally, future work can also explore ensembling from theoretical perspectives beyond reconstruction, such as feature identification.

\clearpage
\section*{Impact Statement}
This paper proposes ensembling SAEs to leverage their feature variability. By improving reconstruction fidelity, feature stability, and downstream utility, our work contributes to the development of more reliable and robust interpretability tools for language models. 

Improved interpretability methods can enable better auditing of model behaviors, identification of harmful biases, and responsible deployment of these models in real-world settings. In particular, our experiments on spurious correlation removal demonstrate potential for mitigating gender bias in downstream classifiers. However, improved interpretability techniques can also be misused by bad actors. More effective feature identification can enable adversaries to introduce undesirable model behaviors through targeted steering. Additionally, careful validation of the identified features is needed to ensure accurate interpretation of model behaviors.

\section*{Acknowledgment}
We thank members of the Lee lab for providing feedback
on this project and the reviewers for their constructive com-
ments. This work was funded by the National Institutes of Health [R01 AG061132, R01 EB035934, RF1 AG088824].

\bibliography{reference}
\bibliographystyle{icml2026}

\newpage
\appendix
\onecolumn
\section*{Appendix}
\section{Theoretical Results}\label{app:proofs}
Here we (re-)state and prove our results from \Cref{sec:ensembling_saes}, \Cref{sec:naive_bagging}, and \Cref{sec:boosting}.

\begin{aproposition}
    Suppose there are $J$ SAEs $g(\cdot; \rvtheta^{(1)}), ..., g(\cdot; \rvtheta^{(J)})$, with decoder matrices $\Wdec^{(1)}, ..., \Wdec^{(J)} \in \sR^{d \times k}$ and decoder biases $\bdec^{(1)}, ..., \bdec^{(J)} \in \sR^{d}$. For a given neural network activation $\rva \in \sR^d$, let $\rvc^{(1)}, ..., \rvc^{(J)} \in \sR^k$ denote the feature coefficients. Then ensembling the $J$ SAEs is equivalent to reconstructing $\rva$ with:
    \begin{equation}
        \hat{\rva} = \Wdec \rvc + \bdec = \sum_{i'=1}^{kJ} \rvc_{i'} \rvf_{i'} + \bdec,
    \end{equation}
    where
    \begin{equation}
        \rvc = 
            \begin{bmatrix}
                \alpha^{(1)} \rvc^{(1)} \\
                \vdots \\
                \alpha^{(J)} \rvc^{(J)}
            \end{bmatrix}
        \text{, }
        \Wdec =
            \begin{bmatrix}
                \Wdec^{(1)} \cdots \Wdec^{(J)}
            \end{bmatrix}
        \text{, }
        \bdec = \sum_{j=1}^{J} \alpha^{(j)} \bdec^{(j)}, 
    \end{equation}
    and $\rvf_{i'} = \Wdec[:, i']$, with $\rvc \in \sR^{kJ}, \Wdec \in \sR^{d \times kJ}, \bdec \in \sR^d$.
\end{aproposition}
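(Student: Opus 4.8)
The plan is to prove the equivalence by direct algebraic substitution, starting from the definition of the SAE ensemble in \Cref{eq:ensembling_saes} and the linear-decoder form of each individual SAE. Recall that each SAE can be written as $g(\rva; \rvtheta^{(j)}) = \Wdec^{(j)} \rvc^{(j)} + \bdec^{(j)}$, where $\rvc^{(j)} = h(\Wenc^{(j)} \rva + \benc^{(j)})$ are its feature coefficients. The entire argument is a linearity computation once this decoder form is in hand, so I do not anticipate a genuine obstacle; the only thing requiring care is the block-matrix bookkeeping.

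First I would substitute the decoder form into the ensemble $\sum_{j=1}^J \alpha^{(j)} g(\rva; \rvtheta^{(j)})$ and distribute the ensemble weights, separating the computation into a feature-reconstruction term and a bias term:
\begin{equation*}
    \sum_{j=1}^J \alpha^{(j)} g(\rva; \rvtheta^{(j)}) = \sum_{j=1}^J \Wdec^{(j)} \bigl(\alpha^{(j)} \rvc^{(j)}\bigr) + \sum_{j=1}^J \alpha^{(j)} \bdec^{(j)}.
\end{equation*}
The second sum is immediately recognized as the claimed ensembled bias $\bdec$.

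Next I would establish the key identity that the first sum equals the block product $\Wdec \rvc$. Here the dimensions must line up: each $\Wdec^{(j)} \in \sR^{d \times k}$, so the horizontal concatenation $\Wdec = [\,\Wdec^{(1)} \cdots \Wdec^{(J)}\,]$ lies in $\sR^{d \times kJ}$, and the vertical stack of the scaled coefficients $\rvc = (\alpha^{(1)}\rvc^{(1)}; \dots; \alpha^{(J)}\rvc^{(J)})$ lies in $\sR^{kJ}$. By the standard rule for multiplying a column-partitioned matrix against a conformably row-partitioned vector, $\Wdec \rvc = \sum_{j=1}^J \Wdec^{(j)}(\alpha^{(j)}\rvc^{(j)})$, matching the feature-reconstruction term exactly. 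This block identity is the one step I would write out explicitly, since it is the crux of why concatenation in feature space reproduces summation in output space.

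Finally I would convert the matrix-vector product into the stated feature-sum form by expanding $\Wdec \rvc$ as a sum over its $kJ$ columns, $\Wdec \rvc = \sum_{i'=1}^{kJ} \rvc_{i'} \Wdec[:,i']$, and invoking the definition $\rvf_{i'} = \Wdec[:, i']$ to obtain $\hat{\rva} = \sum_{i'=1}^{kJ} \rvc_{i'} \rvf_{i'} + \bdec$. Combining the three pieces yields $\hat{\rva} = \Wdec\rvc + \bdec = \sum_{i'=1}^{kJ}\rvc_{i'}\rvf_{i'} + \bdec$, which is the claim. If any step deserves scrutiny it is purely notational—confirming that the placement of the scalars $\alpha^{(j)}$ inside $\rvc$ (as noted in the remark, folding them into the coefficients rather than the decoder) is consistent throughout—rather than mathematical.
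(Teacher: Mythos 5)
Your proof is correct and follows essentially the same route as the paper's: substitute the linear-decoder form $g(\rva;\rvtheta^{(j)}) = \Wdec^{(j)}\rvc^{(j)} + \bdec^{(j)}$ into the ensemble sum, then recognize the sum of matrix-vector products as the product of the concatenated matrix with the stacked (weight-scaled) coefficient vector. Your explicit final step expanding $\Wdec\rvc$ column-by-column into $\sum_{i'}\rvc_{i'}\rvf_{i'}$ is left implicit in the paper but is a harmless addition.
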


\begin{proof}
    Based on the definition of an SAE ensemble in \Cref{eq:ensembling_saes} and the definition of feature coefficients, we have
    \begin{align}
        \hat{\rva}
            &= \sum_{j=1}^J \alpha^{(j)} \left(\Wdec^{(j)} \rvc^{(j)} + \bdec^{(j)} \right) \label{eq:prop1_step1} \\
            &=
                \begin{bmatrix}
                    \Wdec^{(1)} \cdots \Wdec^{(J)}
                \end{bmatrix}
                \begin{bmatrix}
                    \alpha^{(1)} \rvc^{(1)} \\
                    \vdots \\
                    \alpha^{(J)} \rvc^{(J)}
                \end{bmatrix}
                + \sum_{j=1}^J \alpha^{(j)} \bdec^{(j)}\label{eq:prop1_step2} \\
            &= \Wdec \rvc + \bdec\label{eq:prop1_step3},
    \end{align}
    where \Cref{eq:prop1_step2} follows from observing that the sum of matrix-vector product is equivalent to the product of the concatenated matrix and vector.
\end{proof}

Here, we provide a lemma showing the bias-variance decomposition for reconstructing a neural network activation with an ensembled SAE (\Cref{sec:ensembling_saes}).

\begin{lemma} \label{lem:ensemble_bias_variance_decomp}
    Given a neural network activation $\rva^{(*)}$, and the ensembled SAE $g_{\text{Ens}}(\cdot; \{\rvtheta^{(j)}\}_{j=1}^J)$ trained on activations $\{\rva^{(n)}\}_{n=1}^N$, the expected reconstruction error can be decomposed into a bias term and a variance term. That is,
    \begin{align}
        &\E_{\{\rvtheta^{(j)}\}_{j=1}^J | \{\rva^{(n)}\}_{n=1}^N} \left[ \norm{2}{\rva^{(*)} -  g_{\text{Ens}}(\rva^{(*)}; \{\rvtheta^{(j)}\}_{j=1}^J)}^2 \right] \\
        =& \underbrace{\norm{2}{\rva^{(*)} - \E_{\{\rvtheta^{(j)}\}_{j=1}^J | \{\rva^{(n)}\}_{n=1}^N} [g_{\text{Ens}}(\rva^{(*)}; \{\rvtheta^{(j)}\}_{j=1}^J)]}^2 }_{\text{bias term}} \\
        +& \underbrace{\E_{\{\rvtheta^{(j)}\}_{j=1}^J | \{\rva^{(n)}\}_{n=1}^N} \left[ \norm{2}{\E_{\{\rvtheta^{(j)}\}_{j=1}^J | \{\rva^{(n)}\}_{n=1}^N}[g_{\text{Ens}}(\rva^{(*)}; \{\rvtheta^{(j)}\}_{j=1}^J)] -  g_{\text{Ens}}(\rva^{(*)}; \{\rvtheta^{(j)}\}_{j=1}^J)}^2 \right]}_{\text{variance term}}.
    \end{align}
\end{lemma}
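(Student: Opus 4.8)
The plan is to recognize this as the standard bias--variance decomposition for a vector-valued random quantity, where the only randomness is over the trained parameters $\{\rvtheta^{(j)}\}_{j=1}^J$ conditioned on a fixed training set $\{\rva^{(n)}\}_{n=1}^N$. To lighten notation I would write $\E[\cdot]$ for the conditional expectation $\E_{\{\rvtheta^{(j)}\}_{j=1}^J \mid \{\rva^{(n)}\}_{n=1}^N}[\cdot]$, abbreviate $g_{\text{Ens}}$ for the reconstruction $g_{\text{Ens}}(\rva^{(*)}; \{\rvtheta^{(j)}\}_{j=1}^J)$ evaluated at the fixed test activation, and set $\mu = \E[g_{\text{Ens}}]$. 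The key observation is that both $\rva^{(*)}$ and the conditional mean $\mu$ are deterministic once we condition on the training set, so $g_{\text{Ens}}$ is the sole random object under $\E[\cdot]$.

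First I would insert $\mu$ by adding and subtracting it inside the squared norm, writing $\rva^{(*)} - g_{\text{Ens}} = (\rva^{(*)} - \mu) + (\mu - g_{\text{Ens}})$. Then I would expand the squared $\ell_2$-norm using the identity $\norm{2}{\vu + \vv}^2 = \norm{2}{\vu}^2 + 2\langle \vu, \vv \rangle + \norm{2}{\vv}^2$, producing three terms inside the expectation.

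Next, by linearity of expectation the expectation distributes over the three terms. The first term $\norm{2}{\rva^{(*)} - \mu}^2$ is deterministic given the training set and passes through unchanged, yielding exactly the claimed bias term; the third term $\E[\norm{2}{\mu - g_{\text{Ens}}}^2]$ is precisely the claimed variance term. It then remains to kill the cross term: since $\rva^{(*)} - \mu$ is a fixed vector, I can pull it out of the expectation to obtain $2\langle \rva^{(*)} - \mu, \ \E[\mu - g_{\text{Ens}}] \rangle$, and $\E[\mu - g_{\text{Ens}}] = \mu - \E[g_{\text{Ens}}] = \mu - \mu = \vzero$ by the very definition of $\mu$, so this term vanishes.

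There is no genuine obstacle here; the only point requiring care is the bookkeeping of what is random versus fixed under the conditional expectation — specifically that $\mu$ is deterministic given the training set, which is exactly what permits pulling $\rva^{(*)} - \mu$ out of the cross term and forcing that term to zero. Everything else is the routine expansion of a squared norm.
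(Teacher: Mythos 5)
Your proposal is correct and follows essentially the same route as the paper's proof: add and subtract the conditional mean $\E[g_{\text{Ens}}]$, expand the squared norm into three terms, note that the bias term is deterministic given the training set, and kill the cross term by pulling the fixed vector $\rva^{(*)} - \E[g_{\text{Ens}}]$ out of the expectation so that it pairs with $\E[g_{\text{Ens}}] - \E[g_{\text{Ens}}] = \vzero$. No substantive differences.
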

\begin{proof}
    Since all the expectations are taken with respect to the same randomness, their subscripts are dropped for notational ease. Also, let $\Theta^{(J)} = \{\rvtheta^{(j)}\}_{j=1}^J$. We have
    \begin{align}
        &\E \left[
            \norm{2}{
                \rva^{(*)}
                - g_{\text{Ens}}(\rva^{(*)}; \Theta^{(J)})
            }^2
        \right] \\
        =& \E \left[
            \norm{2}{
                \rva^{(*)}
                - \E[g_{\text{Ens}}(\rva^{(*)}; \Theta^{(J)})]
                + \E[g_{\text{Ens}}(\rva^{(*)}; \Theta^{(J)})]
                - g_{\text{Ens}}(\rva^{(*)}; \Theta^{(J)})
            }^2
        \right] \\
        =&
        \E \left[
            \norm{2}{
                \rva^{(*)}
                - \E[g_{\text{Ens}}(\rva^{(*)}; \Theta^{(J)})]
            }^2
        \right] \label{eq:bias_in_expectation} \\
        &+
        \E \left[
            \norm{2}{
                \E[g_{\text{Ens}}(\rva^{(*)}; \Theta^{(J)})]
                - g_{\text{Ens}}(\rva^{(*)}; \Theta^{(J)})
            }^2
        \right] \\
        &+ 2\E \left[
            \left(
                \rva^{(*)} - \E[g_{\text{Ens}}(\rva^{(*)}; \Theta^{(J)})]
            \right)^\top
            \left(
                \E[g_{\text{Ens}}(\rva^{(*)}; \Theta^{(J)})]
                - g_{\text{Ens}}(\rva^{(*)}; \Theta^{(J)})
            \right)
        \right] \label{eq:decomp_last_term}.
    \end{align}
    Because $\rva^{(*)}$ and $\E[g_{\text{Ens}}(\rva^{(*)}; \Theta^{(J)})]$ are constants with respect to the expectation, for (\ref{eq:bias_in_expectation}) we have
    \begin{equation}
        \E \left[
            \norm{2}{
                \rva^{(*)}
                - \E[g_{\text{Ens}}(\rva^{(*)}; \Theta^{(J)})]
            }^2
        \right] = 
        \norm{2}{
            \rva^{(*)}
            - \E[g_{\text{Ens}}(\rva^{(*)}; \Theta^{(J)})]
        }^2,
    \end{equation}
    which is the stated bias term.

    For the last term in (\ref{eq:decomp_last_term}), we have
    \begin{align}
        &\E \left[
            \left(
                \rva^{(*)} - \E[g_{\text{Ens}}(\rva^{(*)}; \Theta^{(J)})]
            \right)^\top
            \left(
                \E[g_{\text{Ens}}(\rva^{(*)}; \Theta^{(J)})]
                - g_{\text{Ens}}(\rva^{(*)}; \Theta^{(J)})
            \right)
        \right] \\
        =& \left(
                \rva^{(*)} - \E[g_{\text{Ens}}(\rva^{(*)}; \Theta^{(J)})]
            \right)^\top
            \left(
                \E[g_{\text{Ens}}(\rva^{(*)}; \Theta^{(J)})]
                - \E[g_{\text{Ens}}(\rva^{(*)}; \Theta^{(J)})]
            \right) = 0,
    \end{align}
    again because $\rva^{(*)}$ and $\E[g_{\text{Ens}}(\rva^{(*)}; \Theta^{(J)})]$ are constants with respect to the expectation. Taken together, we have
    \begin{align}
        &\E \left[
            \norm{2}{
                \rva^{(*)}
                - g_{\text{Ens}}(\rva^{(*)}; \Theta^{(J)})
            }^2
        \right] \\
        =& \norm{2}{
            \rva^{(*)}
            - \E[g_{\text{Ens}}(\rva^{(*)}; \Theta^{(J)})]
        }^2
        + \E \left[
            \norm{2}{
                \E[g_{\text{Ens}}(\rva^{(*)}; \Theta^{(J)})]
                - g_{\text{Ens}}(\rva^{(*)}; \Theta^{(J)})
            }^2
        \right],
    \end{align}
    where the first term is the stated bias term, and the second term is the stated variance term.
\end{proof}

We now show that naive bagging (\Cref{sec:naive_bagging}) can reduce the reconstruction variance above. Formally, we have the following proposition.
\begin{proposition}\label{prop:naive_bagging_reconstruction}
    Given a neural network activation $\rva^{(*)}$ and the ensembled SAE $g_{\text{NB}}(\cdot; \{\rvtheta^{(j)}\}_{j=1}^J)$ obtained through naive bagging trained on activations $\{\rva^{(n)}\}_{n=1}^N$, the variance term in Lemma~\ref{lem:ensemble_bias_variance_decomp} goes to zero as $J \rightarrow \infty$.
\end{proposition}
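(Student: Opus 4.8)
The plan is to exploit the fact that, under naive bagging, the $J$ trained parameter vectors $\rvtheta^{(1)}, \dots, \rvtheta^{(J)}$ are independent and identically distributed: each is produced by the same training procedure on the same activations $\{\rva^{(n)}\}_{n=1}^N$, and the only source of randomness is the independent weight initialization. Consequently, for a fixed evaluation activation $\rva^{(*)}$, the per-SAE reconstructions $X_j := g(\rva^{(*)}; \rvtheta^{(j)})$ are i.i.d. $\R^d$-valued random vectors. Writing $\mu := \E[X_1]$ for their common mean, the bagged reconstruction is exactly the sample mean $g_{\text{NB}}(\rva^{(*)}; \Theta^{(J)}) = \frac{1}{J}\sum_{j=1}^J X_j =: \bar X_J$, and by linearity $\E[g_{\text{NB}}(\rva^{(*)}; \Theta^{(J)})] = \mu$ for every $J$. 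So I would first rewrite the variance term from \Cref{lem:ensemble_bias_variance_decomp} as $\E[\norm{2}{\bar X_J - \mu}^2]$, noting that the ensemble mean is $J$-independent (the bias is frozen and only the variance can shrink).

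The second step is the standard variance-of-the-mean computation. Expanding $\bar X_J - \mu = \frac{1}{J}\sum_{j=1}^J (X_j - \mu)$ and taking the squared Euclidean norm gives a double sum $\frac{1}{J^2}\sum_{j,j'} \E[(X_j-\mu)^\top(X_{j'}-\mu)]$. By independence the off-diagonal terms ($j \neq j'$) vanish because each factor has mean zero, leaving the $J$ diagonal terms, each equal to $\sigma^2 := \E[\norm{2}{X_1 - \mu}^2] = \sum_{q=1}^d \Var(X_{1,q})$. Hence the variance term equals $\sigma^2 / J$, which tends to $0$ as $J \to \infty$ provided $\sigma^2 < \infty$.

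For the almost-sure phrasing, I would invoke the strong law of large numbers: applying it coordinatewise to the i.i.d. sequence $\{X_j\}$ yields $\bar X_J \to \mu$ almost surely, so the realized squared deviation $\norm{2}{g_{\text{NB}}(\rva^{(*)}; \Theta^{(J)}) - \mu}^2 \to 0$ almost surely; combined with the $\sigma^2/J$ bound this pins down both the expected and the pathwise convergence to zero.

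The one assumption that must be stated is finite second moment of the SAE outputs, $\sigma^2 < \infty$; this is what makes the $1/J$ rate (and the i.i.d. cross-term cancellation) valid, and it is mild since in practice the reconstruction of a fixed input is bounded. I expect the main conceptual obstacle to be not the algebra but the justification of the i.i.d. claim, i.e. arguing that sharing the training data while varying only the initialization leaves the parameter draws independent in the sense required for the cross terms to cancel. I would make this explicit by conditioning throughout on $\{\rva^{(n)}\}_{n=1}^N$, so that the independently drawn initialization seeds are the only remaining randomness and are independent across $j$, exactly as the expectations $\E_{\{\rvtheta^{(j)}\}_{j=1}^J \mid \{\rva^{(n)}\}_{n=1}^N}$ in \Cref{lem:ensemble_bias_variance_decomp} already presume.
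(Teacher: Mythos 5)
Your proposal is correct and follows the same core route as the paper: condition on the training set $\{\rva^{(n)}\}_{n=1}^N$ so that the $J$ reconstructions $g(\rva^{(*)};\rvtheta^{(j)})$ are i.i.d.\ (randomness only from initialization), identify the bagged output with their sample mean, and invoke the strong law of large numbers coordinatewise. The difference is that the paper stops there: it assumes only a finite conditional first moment of each coordinate $g_q(\rva^{(*)};\rvtheta)$, cites Etemadi's SLLN to get $\frac{1}{J}\sum_j g(\rva^{(*)};\rvtheta^{(j)}) \to \E_{\rvtheta\mid\rmA}[g(\rva^{(*)};\rvtheta)]$ almost surely, and then substitutes this limit inside the outer expectation defining the variance term. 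You additionally carry out the classical variance-of-the-mean computation, showing the cross terms cancel by independence and the variance term equals $\sigma^2/J$ with $\sigma^2 = \E\bigl[\norm{2}{X_1-\mu}^2\bigr]$. This buys you two things the paper's version does not have: a non-asymptotic $1/J$ rate, and a direct proof that the \emph{expectation} (which is what the variance term in \Cref{lem:ensemble_bias_variance_decomp} actually is) converges to zero --- the paper's step of pushing an almost-sure limit through an expectation would strictly speaking need dominated convergence or uniform integrability, which your second-moment assumption supplies for free. The cost is that you assume $\sigma^2 < \infty$ rather than only a finite first moment; as you note, this is mild for a fixed input. One small wrinkle: since the variance term is a deterministic function of $J$ once you condition on the training set, the ``almost surely'' in the proposition really attaches to the pathwise deviation $\norm{2}{\bar X_J - \mu}^2 \to 0$, which both you and the paper obtain from the SLLN; your $\sigma^2/J$ bound is the cleaner statement about the variance term itself.
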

\begin{proof}
    For notational ease, let $\rmA = \{\rva^{(n)}\}_{n=1}^N$, and $\Theta^{(J)} = \{\rvtheta^{(j)}\}_{j=1}^J$. By the definition of naive bagging, we have
    \begin{equation}
        g_{\text{NB}}(\rva^{(*)}; \Theta^{(J)}) = \frac{1}{J} \sum_{j=1}^J g(\rva^{(*)}; \theta^{(j)}).
    \end{equation}
    It follows that the variance term in Lemma~\ref{lem:ensemble_bias_variance_decomp} can be written as
    \begin{align}
        &\E_{\Theta^{(J)} | \rmA} \left[
            \norm{2}{
                \E_{\Theta^{(J)} | \rmA}\left[ \frac{1}{J} \sum_{j=1}^J g(\rva^{(*)}; \rvtheta^{(j)})\right]
                - \frac{1}{J} \sum_{j=1}^J g(\rva^{(*)}; \rvtheta^{(j)})
            }^2
        \right] \\
        =& \E_{\Theta^{(J)} | \rmA} \left[
            \norm{2}{
                \E_{\rvtheta | \rmA}[g(\rva^{(*)}; \rvtheta)] 
            - \frac{1}{J} \sum_{j=1}^J g(\rva^{(*)}; \rvtheta^{(j)})
            }^2
        \right], \label{eq:naive_bagging_identical_trainings}
    \end{align}
    where (\ref{eq:naive_bagging_identical_trainings}) follows from the linearity of expectation, and from the fact that $\rvtheta^{(1)}, ..., \rvtheta^{(J)}$ are identically distributed when conditioned on $\rmA$.
    
    For practical neural networks and SAEs, we can assume that
    \begin{equation}
        |g_q(\rva^{(*)}; \rvtheta)| < \infty
    \end{equation}
    for each dimension $q \in [d]$. Furthermore, conditioned on $\rmA$, the trainings of $\rvtheta^{(1)}, ..., \rvtheta^{(J)}$ are identically and independently distributed. Therefore, we have
    \begin{align}
        &\E_{\Theta^{(J)} | \rmA} \left[
            \norm{2}{
                \E_{\rvtheta | \rmA}[g(\rva^{(*)}; \rvtheta)] 
            - \frac{1}{J} \sum_{j=1}^J g(\rva^{(*)}; \rvtheta^{(j)})
            }^2
        \right] \\
        =& \E_{\Theta^{(J)} | \rmA} \left[
            \norm{2}{
                \frac{1}{J} \left(\sum_{j=1}^J \{g(\rva^{(*)}; \rvtheta^{(j)}) - \E_{\rvtheta | \rmA}[g(\rva^{(*)}; \rvtheta)]\}\right) 
            }^2
        \right] \\
        =& \frac{1}{J^2} \E_{\Theta^{(J)} | \rmA} \left[
            \sum_{j=1}^J \sum_{i=1}^J \left(g(\rva^{(*)}; \rvtheta^{(j)}) - \E_{\rvtheta | \rmA}[g(\rva^{(*)}; \rvtheta)]\right)^\top \left(g(\rva^{(*)}; \rvtheta^{(i)}) - \E_{\rvtheta | \rmA}[g(\rva^{(*)}; \rvtheta)]\right)
        \right] \\
        =& \frac{1}{J^2} \sum_{j=1}^J \E_{\theta^{(j)} | \rmA} \left[
            \norm{2}{
                g(\rva^{(*)}; \rvtheta^{(j)}) - \E_{\rvtheta | \rmA}[g(\rva^{(*)}; \rvtheta)] 
            }^2\right] \\
            &+ \frac{1}{J^2} \sum_{i \neq  j} \E_{\theta^{(j)}, \theta^{(i)} | \rmA}\left[  \left(g(\rva^{(*)}; \rvtheta^{(j)}) - \E_{\rvtheta | \rmA}[g(\rva^{(*)}; \rvtheta)]\right)^\top \left(g(\rva^{(*)}; \rvtheta^{(i)}) - \E_{\rvtheta | \rmA}[g(\rva^{(*)}; \rvtheta)]\right)
        \right] \\
        =& \frac{1}{J} \E_{\theta | \rmA} \left[
            \norm{2}{
                g(\rva^{(*)}; \rvtheta) - \E_{\rvtheta | \rmA}[g(\rva^{(*)}; \rvtheta)] 
            }^2\right] \\
            &+ \frac{1}{J^2} \sum_{i \neq  j} \E_{\rvtheta | \rmA}\left[  \left(g(\rva^{(*)}; \rvtheta) - \E_{\rvtheta | \rmA}[g(\rva^{(*)}; \rvtheta)]\right) \right]^\top \E_{\rvtheta | \rmA}\left[\left(g(\rva^{(*)}; \rvtheta) - \E_{\rvtheta | \rmA}[g(\rva^{(*)}; \rvtheta)]\right)
        \right] \\
        =& \frac{1}{J} \E_{\theta | \rmA} \left[
            \norm{2}{
                g(\rva^{(*)}; \rvtheta) - \E_{\rvtheta | \rmA}[g(\rva^{(*)}; \rvtheta)] 
            }^2\right],
    \end{align}
    where the third equality follows from the linearity of expectation, and the fourth equality from $\theta^{(1)}, ..., \theta^{(J)}$ having identical and independent distributions. The cross terms in the fourth equality sum to zero because of identical and independent distributions.
    With the assumption that $|g_q(\rva^{(*)}; \rvtheta)| < \infty$, we have
    \begin{equation}
            \norm{2}{
                g(\rva^{(*)}; \rvtheta) - \E_{\rvtheta | \rmA}[g(\rva^{(*)}; \rvtheta)] 
            }^2 < \infty.
    \end{equation}
    Therefore, the variance term in Lemma~\ref{lem:ensemble_bias_variance_decomp} goes to zero as $J \rightarrow \infty$.
\end{proof}

\begin{remark}
    We note that all the expectations in the bias-variance decomposition in Lemma~\ref{lem:ensemble_bias_variance_decomp} are conditioned on the specific training set $\{\rva^{(n)}\}_{n=1}^N$. This conditioning is needed for Proposition~\ref{prop:naive_bagging_reconstruction} to hold. Otherwise separate training runs of the SAE are dependent through the training set.
\end{remark}

We now discuss the two assumptions needed for bounding the bias term in Lemma~\ref{lem:ensemble_bias_variance_decomp} for boosting (\Cref{sec:boosting}).
\begin{assumption} \label{assume:boostin_generalization_bound}
    For a given neural network activation $\rva^{(*)}$ and the ensembled SAE $g_{\text{Boost}}(\cdot; \{\rvtheta^{(j)}\}_{j=1}^J)$ obtained through boosting trained on the activations $\{\rva^{(n)}\}_{n=1}^N$, we assume that
    \begin{align}
        &
        \norm{2}{
         \rva^{(*)} - \E_{\{\rvtheta^{(j)}\}_{j=1}^J | \{\rva^{(n)}\}_{n=1}^N} [g_{\text{Boost}}(\rva^{(*)}; \{\rvtheta^{(j)}\}_{j=1}^J)]   
        }^2 \\
        \le&
        \frac{1}{N} \sum_{n=1}^N \norm{2}{
            \rva^{(n)} - \E_{\{\rvtheta^{(j)}\}_{j=1}^J | \{\rva^{(n)}\}_{n=1}^N} [g_{\text{Boost}}(\rva^{(n)}; \{\rvtheta^{(j)}\}_{j=1}^J)]
        }^2 + \varepsilon_{G},
    \end{align}
    for some constant $\varepsilon_G > 0$.
\end{assumption}
\begin{remark}
    Assumption~\ref{assume:boostin_generalization_bound} is essentially a generalization bound on the reconstruction performance for boosting. Intuitively, this assumption can hold because SAEs are regularized. However, note that this assumption can break down when $\rva^{(*)}$ is much different from $\{\rva^{(n)}\}_{n=1}^N$, which is a general pitfall for generalization bounds.
\end{remark}

\begin{assumption} \label{assume:boosting_irreducible_error}
    For the ensembled SAE $g_{\text{Boost}}(\cdot; \{\rvtheta^{(j)}\}_{j=1}^J)$ obtained through boosting trained on the activations $\{\rva^{(n)}\}_{n=1}^N$, we assume that as $J \rightarrow \infty$,
    \begin{equation}
        \frac{1}{N} \sum_{n=1}^N \norm{2}{
            \rva^{(n)} - \E_{\{\rvtheta^{(j)}\}_{j=1}^J | \{\rva^{(n)}\}_{n=1}^N} [g_{\text{Boost}}(\rva^{(n)}; \{\rvtheta^{(j)}\}_{j=1}^J)]
        }^2 \le \varepsilon_{I},
    \end{equation}
    for some constant $\varepsilon_I > 0$.
\end{assumption}
\begin{remark}    Assumption~\ref{assume:boosting_irreducible_error} formalizes the intuition that boosting should be able to overfit almost perfectly to the training set. However, there is some irreducible error $\varepsilon_{I}$ because SAEs are simple and regularized models. This intuition is empirically verified in Supplementary Figure \ref{sfig:mse_loss_last_step}.
\end{remark}

We now present the proposition showing that boosting with more iterations can lead to a bounded bias term in Lemma~\ref{lem:ensemble_bias_variance_decomp}.
\begin{proposition}\label{prop:boosting_reconstruction}
    For a given neural network activation $\rva^{(*)}$ and the ensembled SAE $g_{\text{Boost}}(\cdot; \{\rvtheta^{(j)}\}_{j=1}^J)$ obtained through boosting trained on the activations $\{\rva^{(n)}\}_{n=1}^N$, under Assumption~\ref{assume:boostin_generalization_bound} and Assumption~\ref{assume:boosting_irreducible_error} we have, as $J \rightarrow \infty$,  
    \begin{equation}
            \norm{2}{
             \rva^{(*)} - \E_{\{\rvtheta^{(j)}\}_{j=1}^J | \{\rva^{(n)}\}_{n=1}^N} [g_{\text{Boost}}(\rva^{(*)}; \{\rvtheta^{(j)}\}_{j=1}^J)]   
            }^2 \le \varepsilon,
    \end{equation}
    for some constant $\varepsilon > 0$.
\end{proposition}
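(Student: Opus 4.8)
The plan is to chain the two stated assumptions by transitivity, since the conclusion of the proposition is a direct consequence of combining the generalization bound in \Cref{assume:boostin_generalization_bound} with the irreducible-error bound in \Cref{assume:boosting_irreducible_error}. First I would identify the left-hand side of the claimed inequality as precisely the \textbf{bias term} appearing in the bias-variance decomposition of \Cref{lem:ensemble_bias_variance_decomp}, now specialized to the boosted ensemble $g_{\text{Boost}}$. This fixes the object of interest as $\norm{2}{\rva^{(*)} - \E[g_{\text{Boost}}(\rva^{(*)})]}^2$, where all expectations are conditioned on the training activations $\{\rva^{(n)}\}_{n=1}^N$, matching the conditioning used throughout the decomposition.

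Next I would apply \Cref{assume:boostin_generalization_bound} to upper bound this test-point bias term by the averaged empirical bias $\tfrac{1}{N}\sum_{n=1}^N \norm{2}{\rva^{(n)} - \E[g_{\text{Boost}}(\rva^{(n)})]}^2$ plus the generalization slack $\varepsilon_G$. I would then invoke \Cref{assume:boosting_irreducible_error} to bound that empirical average by $\varepsilon_I$ in the limit $J \to \infty$. Composing the two inequalities yields the claimed bound with $\varepsilon = \varepsilon_G + \varepsilon_I$, which completes the argument.

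The one point requiring care is the ordering of the ``$J \to \infty$'' limit: \Cref{assume:boosting_irreducible_error} is stated asymptotically in $J$, whereas \Cref{assume:boostin_generalization_bound} holds for each fixed $J$. I would therefore phrase the chained inequality as valid for every $J$, take the limit superior as $J \to \infty$ on both sides, and use \Cref{assume:boosting_irreducible_error} to control the empirical term in that limit. Since the result is a pure composition of two given bounds, there is essentially no analytic obstacle in the proof itself; the substantive content lives entirely in justifying the two assumptions---the generalization behavior of regularized SAEs and boosting's ability to drive the training residual down to an irreducible floor---which are argued in the accompanying remarks and empirically supported rather than established within this proof.
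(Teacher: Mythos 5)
Your proposal matches the paper's proof exactly: both chain \Cref{assume:boostin_generalization_bound} and \Cref{assume:boosting_irreducible_error} by transitivity and set $\varepsilon = \varepsilon_G + \varepsilon_I$. Your added remark about handling the $J \to \infty$ limit carefully is a minor refinement the paper glosses over, but the argument is the same.
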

\begin{proof}
    The proof follows immediately under the assumptions. We have
    \begin{align}
        &
            \norm{2}{
             \rva^{(*)} - \E_{\{\rvtheta^{(j)}\}_{j=1}^J | \{\rva^{(n)}\}_{n=1}^N} [g_{\text{Boost}}(\rva^{(*)}; \{\rvtheta^{(j)}\}_{j=1}^J)]   
            }^2 \\
        \le& \frac{1}{N} \sum_{n=1}^N \norm{2}{
            \rva^{(n)} - \E_{\{\rvtheta^{(j)}\}_{j=1}^J | \{\rva^{(n)}\}_{n=1}^N} [g_{\text{Boost}}(\rva^{(n)}; \{\rvtheta^{(j)}\}_{j=1}^J)]
        }^2 + \varepsilon_{G} \label{eq:boosting_reconstruction_step1} \\
        \le& \varepsilon_I + \varepsilon_G \label{eq:boosting_reconstruction_step2},
    \end{align}
    where (\ref{eq:boosting_reconstruction_step1}) uses Assumption~\ref{assume:boostin_generalization_bound}, and (\ref{eq:boosting_reconstruction_step2}) uses Assumption~\ref{assume:boosting_irreducible_error}. Setting $\varepsilon = \varepsilon_I + \varepsilon_G$ completes the proof.
\end{proof}
\begin{remark}
    Proposition~\ref{prop:boosting_reconstruction} is not surprising given Assumption~\ref{assume:boostin_generalization_bound} and Assumption~\ref{assume:boosting_irreducible_error}. However, this formalization gives us insights about reasons why boosting may fail to reduce the bias term in the generalization region. That is, Assumption~\ref{assume:boostin_generalization_bound} or Assumption~\ref{assume:boosting_irreducible_error} may not hold (e.g. due to distribution shift or having too many constraints on the SAE, respectively).
\end{remark}
\begin{remark}
    Finally, we note that Proposition~\ref{prop:naive_bagging_reconstruction} and Proposition~\ref{prop:boosting_reconstruction} are both asymptotic results with respect to the number of SAEs in the ensemble, primarily serving to motivate naive bagging and boosting from the perspective of the reconstruction error. Future work that relates reconstruction to the identifiability of human-interpretable features would be more directly useful for downstream interpretability tasks.
\end{remark}

\begin{suppfigure}[h!]
\centering
\includegraphics[width=\textwidth]{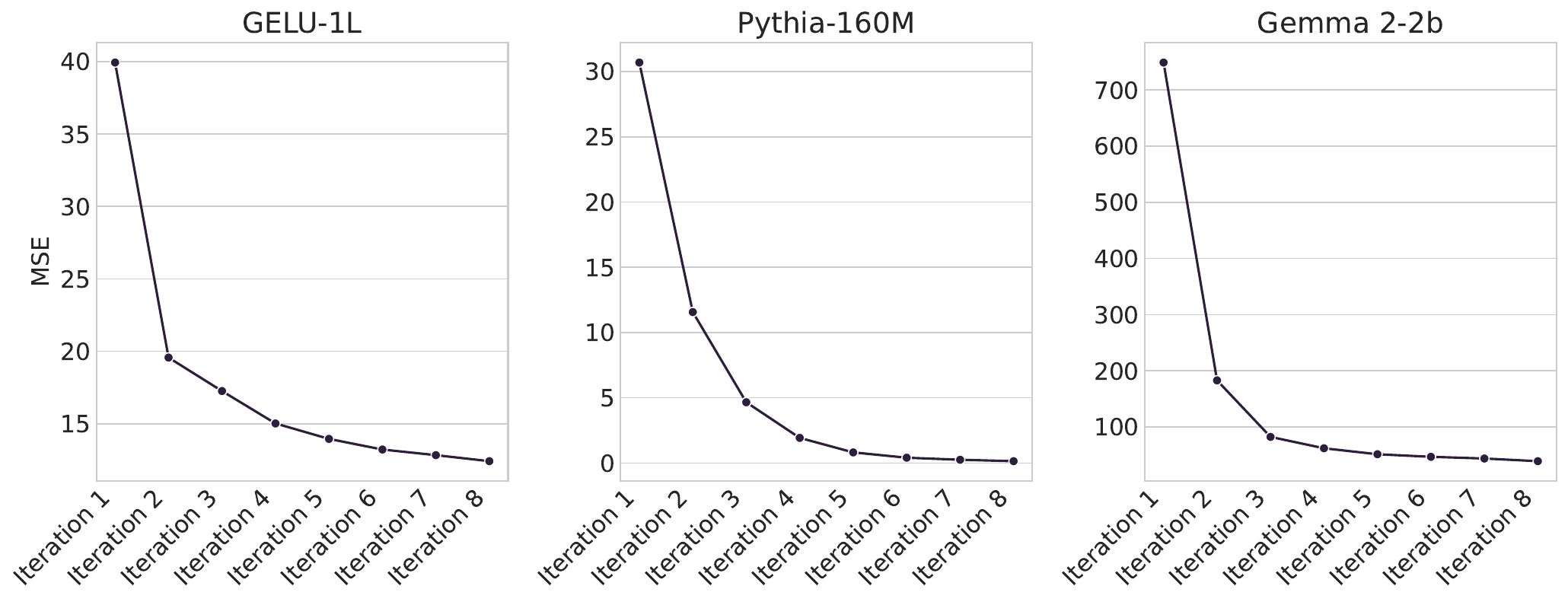}
\caption{MSE loss at the last training step for each iteration of a boosting ensemble with 8 SAEs. Reconstruction performance improves with each boosting iteration. 
} 
\label{sfig:mse_loss_last_step}
\end{suppfigure}

\clearpage
\section{Details on the Expanded SAE Baseline}
\label{app:expanded_sae}
For an expanded SAE, we choose the sparsity achieved by the corresponding boosted SAE as the target $L_0$. The sparsity of boosting instead of naive bagging is chosen for two reasons. Conceptually, naive bagging results in some redundant features, which contribute to higher total $L_0$ but do not reflect more diverse feature directions. Therefore, comparing with the $L_0$ of boosting provides a more representative baseline when assessing whether the expanded SAE can match or exceed the performance of an ensemble. Empirically, we observe that it is impractical to obtain an $L_0$ comparable to naive bagging even with a very small sparsity coefficient.

\section{Results for GELU-1L and Pythia-160M}
\label{app:add_intrinsic_eval}
Here we show the results for the intrinsic evaluations of GELU-1L (Supplementary Figure \ref{sfig:gelu_eval}) and Pythia-160M (Supplementary Figure \ref{sfig:pythia_eval}). Overall, the trend is similar to that of Gemma 2-2B; performance on most of the metrics improves as more SAEs are added to the ensemble, although it saturates for some of them around 8 SAEs. Also, boosting outperforms naive bagging in all metrics except for stability.

\begin{suppfigure}[h!]
\centering
\includegraphics[width=0.8\textwidth]{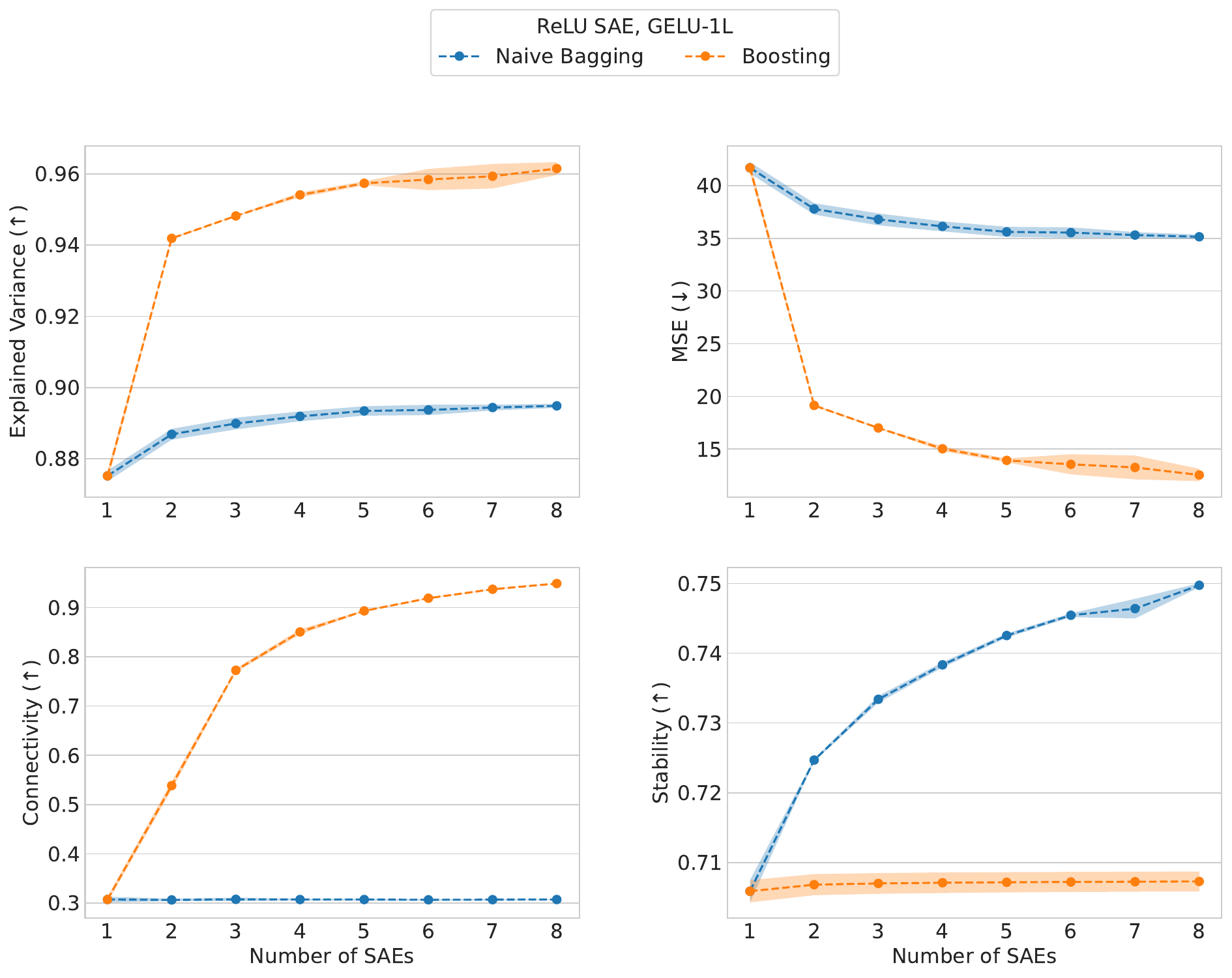}
\caption{Intrinsic evaluation of the ensembling approaches for GELU-1L. The shaded regions indicate 95\% confidence intervals across 5 experiment runs.
} 
\label{sfig:gelu_eval}
\end{suppfigure}

\begin{suppfigure}[h!]
\centering
\includegraphics[width=0.8\textwidth]{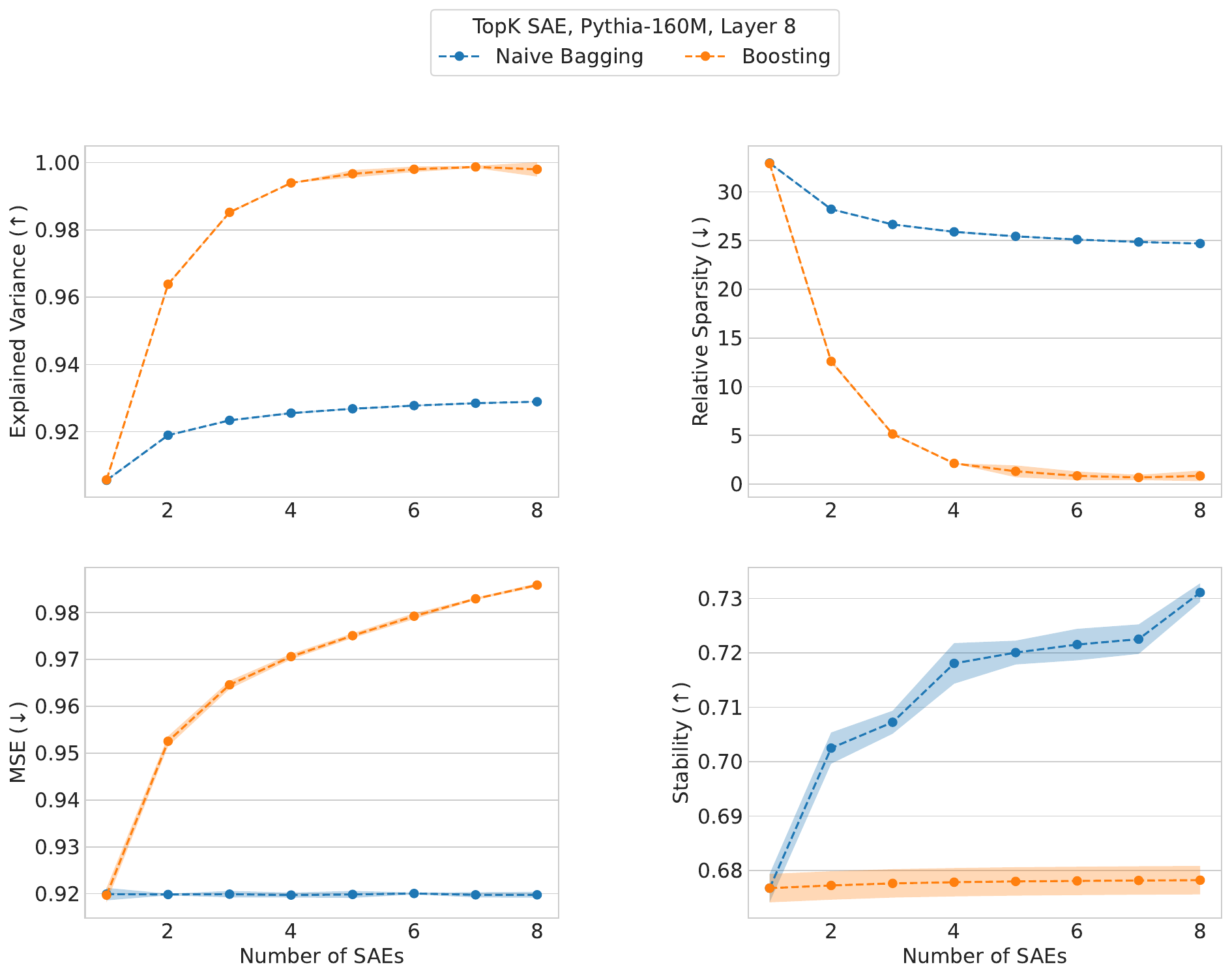}
\caption{Intrinsic evaluation of the ensembling approaches for Pythia-160M. The shaded regions indicate 95\% confidence intervals across 5 experiment runs.
} 
\label{sfig:pythia_eval}
\end{suppfigure}

\clearpage
\section{Sanity check with the Base SAE}
Here we provide a comparison of the ensembling methods with the base SAE on the intrinsic metrics (Supplementary Table \ref{stab:ensemble_eval}), the AutoInterp scores (Supplementary Table \ref{stab:autointerp}), the concept detection task (Supplementary Table \ref{stab:concept_detection}), and the SCR task (Supplementary Table \ref{stab:scr}). Across all the intrinsic metrics and the two use cases, we observe that ensembling can perform better than the base SAE.  
\setlength{\tabcolsep}{1pt}
\begin{supptable}[h!]
\centering
\small
\caption{Intrinsic evaluation metrics for the base SAE, naive bagging (NB), and boosting (ensembling 8 SAEs). Means along with 95\% confidence intervals are reported across 5 runs.}
\vspace{0.3em}
\resizebox{0.8\textwidth}{!}{
\begin{tabular}{p{3cm}p{2cm}p{2.3cm}p{2.2cm}l}

\toprule
\textbf{Ensembling Method} & \textbf{Explained \hspace{2em} Variance ($\uparrow$)}            & \textbf{MSE ($\downarrow$)}                & \textbf{Connectivity ($\uparrow$)}             & \textbf{Stability ($\uparrow$)}      \\
\midrule
\multicolumn{5}{l}{\textbf{GELU-1L}}                 \\
\midrule
Base SAE      & 0.875 (0.0020)                 & 41.694 (0.536)                    & 0.307 (0.0057)           & 0.705 (0.0016)          \\
Ensembling (NB)     & 0.895 (0.0006)                   & 35.147 (0.210)                         & 0.307 (0.0009)           & \textbf{0.745 (0.0002)} \\
Ensembling (Boosting)          & \textbf{0.961 (0.0018)}  & \textbf{12.542 (0.589)}    & \textbf{0.945 (0.0004)}  & 0.707 (0.0014)          \\
\midrule
\multicolumn{5}{l}{\textbf{Pythia-160M}}                                                                                                                                     \\
\midrule
Base SAE        & 0.906 (0.0003)                    & 32.965 (0.077)                       & 0.912 (0.0013)           & 0.677 (0.0026)          \\
Ensembling (NB)    & 0.929 (0.0000)           & 24.704 (0.019)             & 0.912 (0.0006)           & \textbf{0.731 (0.0017)} \\
Ensembling (Boosting)          & \textbf{0.998 (0.0021)} & \textbf{0.845 (0.547)}   & \textbf{0.986 (0.0004)}  & 0.680 (0.0025)          \\
\midrule
\multicolumn{5}{l}{\textbf{Gemma 2-2B}}                                                                                                                                      \\
\midrule
Base SAE      & 0.920 (0.0006)                  & 716.659 (5.875)           & 0.768 (0.0016)           & 0.581 (0.0006)          \\
Ensembling (NB)     & 0.974 (0.0006) & 234.128 (6.228) & 0.769 (0.0007)  & \textbf{0.633 (0.0014)} \\
Ensembling (Boosting)          & \textbf{0.995 (0.0003)} & \textbf{46.538 (2.923)} & \textbf{0.989 (0.0003)} & 0.583 (0.0009)   \\  
\bottomrule
\end{tabular}
}

\label{stab:ensemble_eval}
\end{supptable}

\begin{supptable}[h!]
\centering
\caption{AutoInterp scores for the base SAE, naive bagging (NB), and boosting (ensembling 8 SAEs). Means along with 95\% confidence intervals are reported across 5 experiment runs.}
\resizebox{0.6\textwidth}{!}{
\begin{tabular}{p{3.3cm}>{\centering}p{2cm}>{\centering}p{2cm}>{\centering\arraybackslash}p{2cm}>{\centering\arraybackslash}p{2cm}} \\
\toprule
\textbf{}      & \textbf{GELU-1L} & \textbf{Pythia-160M} & \textbf{Gemma 2-2B} \\
\midrule
Base SAE    & 0.690 (0.128)  & 0.840 (0.003) & 0.803 (0.006) \\
Ensembling (NB)                                    & 0.738 (0.147)                     & \textbf{0.857(0.004)}  &   0.799 (0.008)           \\
Ensembling (Boosting)        & \textbf{0.863 (0.005)}                & 0.852 (0.018) & \textbf{0.814 (0.002)} \\
\bottomrule
\end{tabular}
}
\label{stab:autointerp}
\end{supptable}

\begin{supptable}[h!]
\centering
\caption{Test accuracy of the logistic regression classifier for the top concept-associated feature across four concept detection tasks for Pythia-70M. SAE ensembles consist of 8 SAEs. Means along with 95\% confidence intervals are reported across 5 experiment runs.}
\vspace{0.3em}

\resizebox{0.7\textwidth}{!}{
\begin{tabular}{p{3.3cm}>{\centering}p{2cm}>{\centering}p{2cm}>{\centering\arraybackslash}p{2cm}>{\centering\arraybackslash}p{2cm}}
\toprule
\textbf{}      & \textbf{Amazon Review \hspace{2em} (Sentiment)} & \textbf{GitHub Code \hspace{4em} (Language)} & \textbf{AG News \hspace{2em}(Topic)} & \textbf{European Parliament \hspace{2em}(Language)} \\
\midrule
Base SAE   & 0.618 (0.030) & 0.711 (0.020)  & 0.733 (0.021) & 0.938 (0.016) \\
Ensembling (NB)                 & \textbf{0.631 (0.036)}                     & \textbf{0.715 (0.012)}                     & 0.742 (0.037)   &   \textbf{0.943 (0.016)}           \\
Ensembling (Boosting)        & 0.624 (0.037)            & 0.682 (0.021)            & \textbf{0.759 (0.021)} & 0.920 (0.015) \\
\bottomrule
\end{tabular}
}
\label{stab:concept_detection}
\end{supptable}

\begin{supptable}[h!]
\centering
\caption{$S_\text{SHIFT}$ scores for the spurious correlation removal task with the top 20 gender-related features identified across four pairs of professions for Pythia-70M. SAE ensembles consist of 8 SAEs. Means along with 95\% confidence intervals are reported across 5 experiment runs.}
\vspace{0.3em}
\resizebox{0.7\textwidth}{!}{
\begin{tabular}{l>{\centering}p{2cm}>{\centering}p{2cm}>{\centering}p{2.6cm}>{\centering\arraybackslash}p{2cm}}
\toprule
\textbf{}     & \textbf{Professor \hspace{2em} vs. Nurse} & \textbf{Architect \hspace{2em} vs. Journalist} & \textbf{Surgeon \hspace{4em} vs. Psychologist} & \textbf{Attorney \hspace{2em} vs. Teacher} \\
\midrule
Base SAE  & 0.039 (0.008) & 0.004 (0.006) & 0.027 (0.006)  & 0.017 (0.003) \\
Ensembling (NB) & 0.021 (0.003)                & 0.004 (0.001)                     & 0.014 (0.002)                     & 0.003 (0.005)                 \\
Ensembling (Boosting)      & \textbf{0.066 (0.016)}       & \textbf{0.013 (0.011)}            & \textbf{0.045 (0.014)}            & \textbf{0.029 (0.003)} \\
\bottomrule
\end{tabular}
}
\label{stab:scr}
\end{supptable}





\section{Additional Results for Downstream Use Cases}
\label{app:downstream}
Here we provide additional results for the downstream use cases (\Cref{sec:downstream_concept_detection} and \Cref{sec:downstream_scr}). 

Supplementary Table \ref{stab:concept_detection_top_5} shows the test accuracy of a classifier trained using the top-5 concept-associated features identified by the ensembling methods across four tasks. The results are slightly different from those in \Cref{sec:downstream_concept_detection}, with boosting outperforming naive bagging and the base SAE for three out of the four tasks. This suggests that, while boosting does not identify the top feature, additional features from boosting can be selected to improve concept detection.

\setlength{\tabcolsep}{5pt}
\begin{supptable}[h!]
\centering
\caption{Test accuracy of the logistic regression classifier for the top-5 concept-associated features across four concept detection tasks for Pythia-70M. SAE Ensembles consist of 8 SAEs. Means along with 95\% confidence intervals are reported across 5 experiment runs.}   
\begin{tabular}{l>{\centering}p{2cm}>{\centering}p{2cm}>{\centering}p{2cm}>{\centering\arraybackslash}p{2cm}>{\centering\arraybackslash}p{2cm}}
\toprule
\textbf{}    & \textbf{Amazon Review \hspace{2em} (Sentiment)} & \textbf{GitHub Code \hspace{4em} (Language)} & \textbf{AG News \hspace{2em}(Topic)} & \textbf{European Parliament \hspace{2em}(Language)} \\
\midrule
Base SAE   & 0.702 (0.015) & \textbf{0.805 (0.004)}  & 0.851 (0.005) & 0.981 (0.003) \\
Expanded SAE   & 0.703 (0.005) & 0.786 (0.012)  & 0.862 (0.011) & 0.986 (0.001) \\

Ensembling (NB) & 0.689 (0.015)                & 0.728 (0.005)                    & 0.783 (0.023)   &   0.952 (0.004)          \\
Ensembling (Boosting)   & \textbf{0.708 (0.016)}            & 0.795 (0.016)            & \textbf{0.863 (0.008)} & \textbf{0.988 (0.000)} \\
\bottomrule
\end{tabular}
\vspace{0.3em}

\label{stab:concept_detection_top_5}
\end{supptable}

Supplementary Figure \ref{sfig:scr_vs_topk_ablation} shows the $S_\text{SHIFT}$ scores for the spurious correlation removal task as the number of top gender-related features is varied. The trend is similar to what is observed in \Cref{sec:downstream_scr}, with boosting outperforming naive bagging and the baselines for different numbers of ablated gender-related SAE features. The performance generally increases as the number of ablated features increases, indicating that there are multiple gender related features which are correctly identified by all the methods. This is especially worth noting for naive bagging, as increasing the number of ablated features might lead to all the redundant features related to the spurious signal getting ablated.

\begin{suppfigure}[h!]
\centering
\includegraphics[width=\textwidth]{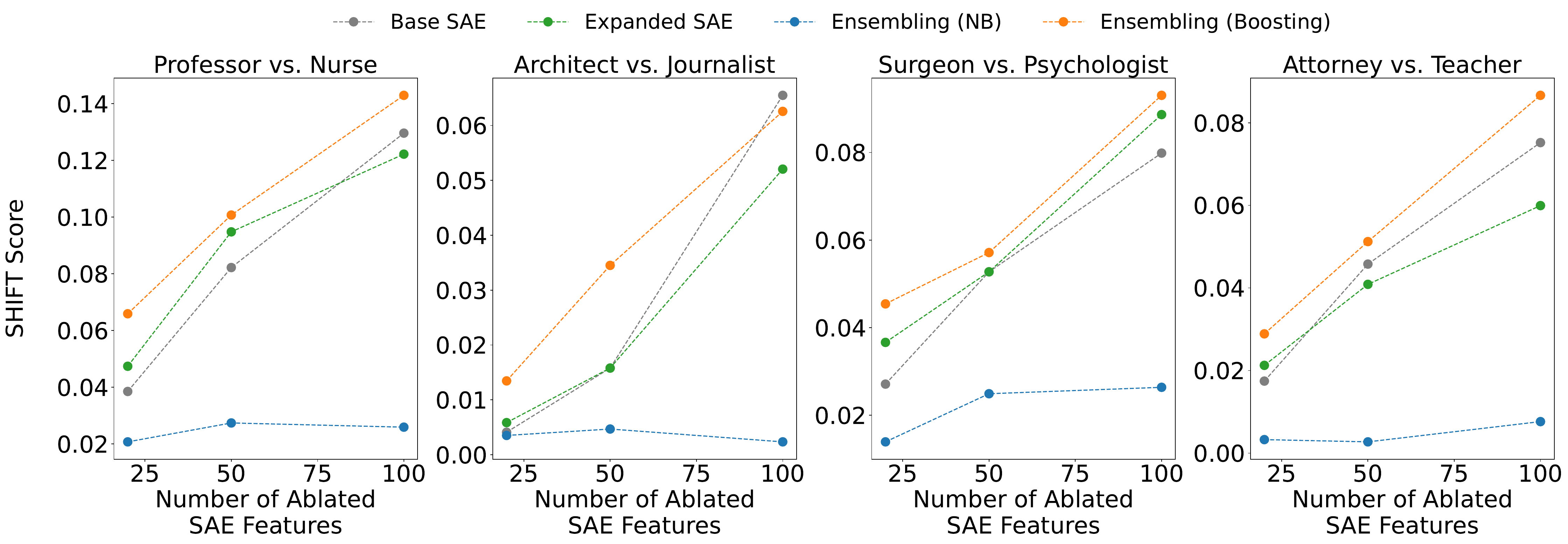}
\caption{$S_{\text{SHIFT}}$ scores for the spurious correlation removal task vs. various numbers of top gender-related features identified across four pairs of professions for Pythia-70M. SAE ensembles consist of 8 SAEs. Means across 5 experiment runs are shown.} 
\label{sfig:scr_vs_topk_ablation}
\end{suppfigure}



\rebuttal{We further evaluate the downstream use cases for the multi-layer language models used in the intrinsic evaluation, Pythia-160M and Gemma 2-2B.} \rebuttal{Supplementary Table \ref{stab:pythia_160m_concept_detection} shows the results of the concept detection task for the Pythia-160M model. Supplementary Table \ref{stab:gemma_2_2b_concept_detection_results} shows the results of the concept detection task for Gemma 2-2B. The results are consistent with those from the Pythia-70M model, with naive bagging outperforming the other methods on three out of the four datasets. For Pythia-160M, the expanded SAE performs better on the AG News (Topic) dataset while for Gemma 2-2B boosting performs better on the GitHub Code (Language) dataset.}

\rebuttal{Supplementary Table \ref{stab:pythia_160m_scr_results} shows the $S_\text{SHIFT}$ scores of the ensembling methods for the SCR task across four profession pairs for the Pythia-160M model. Supplementary Table \ref{stab:gemma_2_2b_scr_results} shows the $S_\text{SHIFT}$ scores for Gemma 2-2B. Overall, the results are consistent with those obtained for the Pythia-70M model, with the boosted ensemble outperforming the other methods for three out of the four profession pairs. For Pythia-160M, naive bagging performs better on the Attorney vs. Teacher pair while for Gemma 2-2B naive bagging performs better on the Architect vs. Journalist pair. It is worth noting that some of the scores for Gemma 2-2B are negative, which could happen when the probe accuracy after ablation is less than the baseline accuracy of the spurious probe before ablation.}

\setlength{\tabcolsep}{5pt}
\begin{supptable}[h!]
\centering
\caption{\rebuttal{Test accuracy of the logistic regression classifier for the top concept-associated feature across four concept detection tasks for Pythia-160M. SAE ensembles consist of 8 SAEs. Means along with 95\% confidence intervals are reported across 5 experiment runs.}}
\begin{tabular}{l>{\centering}p{2cm}>{\centering}p{2cm}>{\centering}p{2cm}>{\centering\arraybackslash}p{2cm}>{\centering\arraybackslash}p{2cm}}
\toprule
\textbf{}    & \textbf{\rebuttal{Amazon Review \hspace{2em} (Sentiment)}} & \textbf{\rebuttal{GitHub Code \hspace{4em} (Language)}} & \textbf{\rebuttal{AG News \hspace{2em}(Topic)}} & \textbf{\rebuttal{European Parliament \hspace{2em}(Language)}} \\
\midrule
\rebuttal{Base SAE} & \rebuttal{0.663 (0.009)} & \rebuttal{0.777 (0.032)} & \rebuttal{0.810 (0.009)} & \rebuttal{0.938 (0.012)} \\
\rebuttal{Expanded SAE} & \rebuttal{0.636 (0.007)} & \rebuttal{0.740 (0.059)} & \textbf{\rebuttal{0.826 (0.003)}} & \rebuttal{0.942 (0.005)} \\
\rebuttal{Ensembling (NB)} & \textbf{\rebuttal{0.702 (0.027)}} & \textbf{\rebuttal{0.813 (0.002)}} & \rebuttal{0.819 (0.012)} & \textbf{\rebuttal{0.945 (0.002)}} \\
\rebuttal{Ensembling (Boosting)} & \rebuttal{0.663 (0.013)} & \rebuttal{0.762 (0.029)} & \rebuttal{0.798 (0.032)} & \rebuttal{0.939 (0.008)} \\
\bottomrule
\end{tabular}
\label{stab:pythia_160m_concept_detection}
\end{supptable}


\begin{supptable}[h!]
\centering
\caption{\rebuttal{Test accuracy of the logistic regression classifier for the top concept-associated feature across four concept detection tasks for Gemma 2-2B. SAE ensembles consist of 8 SAEs. Means along with 95\% confidence intervals are reported across 5 experiment runs.}}
\begin{tabular}{l>{\centering}p{2cm}>{\centering}p{2cm}>{\centering}p{2cm}>{\centering\arraybackslash}p{2cm}>{\centering\arraybackslash}p{2cm}}
\toprule
\textbf{}    & \textbf{\rebuttal{Amazon Review \hspace{2em} (Sentiment)}} & \textbf{\rebuttal{GitHub Code \hspace{4em} (Language)}} & \textbf{\rebuttal{AG News \hspace{2em}(Topic)}} & \textbf{\rebuttal{European Parliament \hspace{2em}(Language)}} \\
\midrule
\rebuttal{Base SAE} & \rebuttal{0.797 (0.058)} & \rebuttal{0.666 (0.023)} & \rebuttal{0.828 (0.027)} & \rebuttal{0.835 (0.047)} \\
\rebuttal{Expanded SAE} & \rebuttal{0.786 (0.091)} & \rebuttal{0.676 (0.011)} & \rebuttal{0.811 (0.026)} & \rebuttal{0.738 (0.003)} \\
\rebuttal{Ensembling (NB)} & \textbf{\rebuttal{0.883 (0.029)}} & \rebuttal{0.691 (0.005)} & \textbf{\rebuttal{0.863 (0.009)}} & \textbf{\rebuttal{0.879 (0.008)}} \\
\rebuttal{Ensembling (Boosting)} & \rebuttal{0.816 (0.053)} & \textbf{\rebuttal{0.692 (0.004)}} & \rebuttal{0.828 (0.039)} & \rebuttal{0.867 (0.003)} \\
\bottomrule
\end{tabular}
\label{stab:gemma_2_2b_concept_detection_results}
\end{supptable}


\begin{supptable}[h!]
\centering
\caption{\rebuttal{$S_\text{SHIFT}$ scores for the spurious correlation removal task with the top 20 gender-related features identified across four pairs of professions for Pythia-160M. SAE Ensembles consist of 8 SAEs. Means along with 95\% confidence intervals are reported across 5 experiment runs.}}
\begin{tabular}{l>{\centering}p{2cm}>{\centering}p{2cm}>{\centering}p{2.6cm}>{\centering\arraybackslash}p{2cm}}
\toprule
\textbf{}     & \textbf{\rebuttal{Professor \hspace{2em} vs. Nurse}} & \textbf{\rebuttal{Architect \hspace{2em} vs. Journalist}} & \textbf{\rebuttal{Surgeon \hspace{4em} vs. Psychologist}} & \textbf{\rebuttal{Attorney \hspace{2em} vs. Teacher}} \\
\midrule
\rebuttal{Single SAE} & \rebuttal{0.626 (0.134)} & \rebuttal{0.485 (0.125)} & \rebuttal{0.543 (0.262)} & \rebuttal{0.318 (0.105)} \\
\rebuttal{Expanded SAE} & \rebuttal{0.559 (0.102)} & \rebuttal{0.703 (0.085)} & \rebuttal{0.729 (0.080)} & \rebuttal{0.358 (0.046)} \\
\rebuttal{Ensembling (NB)} & \rebuttal{0.700 (0.006)} & \rebuttal{0.770 (0.002)} & \rebuttal{0.709 (0.006)} & \textbf{\rebuttal{0.414 (0.002)}} \\
\rebuttal{Ensembling (Boosting)} & \textbf{\rebuttal{0.778 (0.038)}} & \textbf{\rebuttal{0.802 (0.053)}} & \textbf{\rebuttal{0.746 (0.108)}} & \rebuttal{0.341 (0.029)} \\
\bottomrule
\end{tabular}
\label{stab:pythia_160m_scr_results}
\end{supptable}


\begin{supptable}[h!]
\centering
\caption{\rebuttal{$S_\text{SHIFT}$ scores for the spurious correlation removal task with the top 20 gender-related features identified across four pairs of professions for Gemma 2-2B. SAE Ensembles consist of 8 SAEs. Means along with 95\% confidence intervals are reported across 5 experiment runs.}}
\begin{tabular}{l>{\centering}p{2cm}>{\centering}p{2cm}>{\centering}p{2.6cm}>{\centering\arraybackslash}p{2cm}}
\toprule
\textbf{}     & \textbf{\rebuttal{Professor \hspace{2em} vs. Nurse}} & \textbf{\rebuttal{Architect \hspace{2em} vs. Journalist}} & \textbf{\rebuttal{Surgeon \hspace{4em} vs. Psychologist}} & \textbf{\rebuttal{Attorney \hspace{2em} vs. Teacher}} \\
\midrule
\rebuttal{Base SAE} & \rebuttal{0.406 (0.007)} & \rebuttal{-0.391 (0.122)} & \rebuttal{0.206 (0.122)} & \rebuttal{-0.126 (0.141)} \\
\rebuttal{Expanded SAE} & \rebuttal{0.154 (0.127)} & \rebuttal{-0.339 (0.177)} & \rebuttal{0.291 (0.106)} & \rebuttal{-0.252 (0.269)} \\
\rebuttal{Ensembling (NB)} & \rebuttal{0.415 (0.137)} & \textbf{\rebuttal{-0.259 (0.029)}} & \rebuttal{0.121 (0.064)} & \rebuttal{-0.099 (0.061)} \\
\rebuttal{Ensembling (Boosting)} & \textbf{\rebuttal{0.594 (0.037)}} & \rebuttal{-0.528 (0.160)} & \textbf{\rebuttal{0.415 (0.177)}} & \textbf{\rebuttal{0.237 (0.076)}} \\
\bottomrule
\end{tabular}
\label{stab:gemma_2_2b_scr_results}
\end{supptable}


\section{Evaluation on other SAEBench Metrics}
\label{app:saebench}
\rebuttal{We have evaluated our ensembling approaches on two of the metrics implemented in SAEBench \cite{karvonen2025saebench}: concept detection and spurious correlation removal. Here we provide an extensive evaluation on the other metrics implemented in SAEBench for the multi-layer models Pythia-70M, Pythia-160M, and Gemma 2-2B. We don't include the unlearning metric since that metric is specifically for instruct models and none of our models are instruction tuned. For the feature absorption metric, the code repository suggests that this evaluation only makes sense if the LLM is large enough to have decent spelling knowledge and it is not recommended running this evaluation on LLMs with less than 1B parameters. Further, for the SAEs trained using Pythia-160M with our selected hyperparameters, the absorption could not be calculated due to insufficient first-letter features being detected. As a result, we run the feature absorption evaluation only on Gemma 2-2B, evaluating the mean full absorption rate.}

\rebuttal{Supplementary Table \ref{stab:sae_bench_pythia_70m} shows the results for SAEBench evaluation using SAEs trained on Pythia-70M. Ensembling approaches (specifically boosting) are able to perform slightly better than both the base SAE and the expanded SAE across all metrics in terms of the mean performance across five evaluation runs. The naive bagging approach doesn't perform well on the Targeted Probe Perturbation (TPP) task, which could be because this task is conceptually similar to Spurious Correlation Removal and the results are consistent with what was observed in that evaluation with naive bagging not performing well.}

\rebuttal{Supplementary Table \ref{stab:sae_bench_pythia_160m} shows the results for SAEBench evaluation using SAEs trained on Pythia-160M. Ensembling approaches (either naive bagging or boosting) perform better than the base SAE and the expanded SAE on two out of the three metrics in terms of the mean performance across five evaluation runs. The expanded SAE performs better on the TPP metric, while a similar performance drop is observed for naive bagging.}

\rebuttal{Supplementary Table \ref{stab:sae_bench_gemma_2_2b} shows the results for SAEBench evaluation using SAEs trained on Gemma 2-2B. Ensembling approaches (either naive bagging or boosting) perform slightly better than the base SAE and the expanded SAE on three out of the four metrics in terms of the mean performance across five evaluation runs. The expanded SAE performs better on the TPP metric, while a similar performance drop is observed for naive bagging.}

\rebuttal{Overall, our ensembling approaches are able to outperform or perform similarly to the baselines for most of the SAEBench metrics. This highlights that the performance improvements gained by ensembling are not limited to the concept detection and the SCR use cases, but can be generalized to multiple downstream tasks.}

\begin{supptable}[h!]
\centering
\caption{\rebuttal{Evaluation of the ensembling approaches on different SAEBench metrics for Pythia-70M. SAE Ensembles consist of 8 SAEs. Means along with 95\% confidence intervals are reported across 5 evaluation runs.}}
\resizebox{0.7\textwidth}{!}{
\begin{tabular}{lcccc}
\toprule
\rebuttal{\textbf{}} & \rebuttal{\textbf{RAVEL Score ($\uparrow$)}} & \rebuttal{\textbf{CE Loss Score ($\uparrow$)}} & \rebuttal{\textbf{TPP ($\uparrow$)}} \\
\midrule
\rebuttal{Base SAE} & \rebuttal{0.3078 (0.0003)} & \rebuttal{0.9822 (0.0012)} & \rebuttal{0.0164 (0.0015)} \\
\rebuttal{Expanded SAE} & \rebuttal{0.3077 (0.0001)}  & \rebuttal{0.9845 (0.0016)} & \rebuttal{0.0147 (0.0008)} \\
\rebuttal{Ensembling (NB)} & \rebuttal{0.3078 (0.0000)} & \rebuttal{0.9949 (0.0003)} & \rebuttal{0.0075 (0.0004)} \\
\rebuttal{Ensembling (Boosting)} & \textbf{\rebuttal{0.3080 (0.0001)}} & \textbf{\rebuttal{0.9974 (0.0002)}} & \textbf{\rebuttal{0.0187 (0.0005)}} \\
\bottomrule
\end{tabular}
}
\label{stab:sae_bench_pythia_70m}
\end{supptable}


\begin{supptable}[h!]
\centering
\caption{\rebuttal{Evaluation of the ensembling approaches on different SAEBench metrics for Pythia-160M. SAE Ensembles consist of 8 SAEs. Means along with 95\% confidence intervals are reported across 5 evaluation runs.}}
\resizebox{0.7\textwidth}{!}{
\begin{tabular}{lcccc}
\toprule
\rebuttal{\textbf{}} & \rebuttal{\textbf{RAVEL Score ($\uparrow$)}} & \rebuttal{\textbf{CE Loss Score ($\uparrow$)}} & \rebuttal{\textbf{TPP ($\uparrow$)}} \\
\midrule
\rebuttal{Base SAE} & \rebuttal{0.4969 (0.0010)} & \rebuttal{0.9771 (0.0004)} & \rebuttal{0.2635 (0.0149)} \\
\rebuttal{Expanded SAE} & \rebuttal{0.4970 (0.0010)} & \rebuttal{0.9783 (0.0001)} & \textbf{\rebuttal{0.2886 (0.0138)}} \\
\rebuttal{Ensembling (NB)} & \rebuttal{0.4977 (0.0008)} & \rebuttal{0.9829 (0.0000)} & \rebuttal{0.0163 (0.0012)} \\
\rebuttal{Ensembling (Boosting)} & \textbf{\rebuttal{0.5020 (0.0008)}} & \textbf{\rebuttal{1.0000 (0.0000)}} & \rebuttal{0.2701 (0.0128)} \\
\bottomrule
\end{tabular}
}
\label{stab:sae_bench_pythia_160m}
\end{supptable}


\setlength{\tabcolsep}{1pt}
\begin{supptable}[h!]
\centering
\caption{\rebuttal{Evaluation of the ensembling approaches on different SAEBench metrics for Gemma 2-2B. SAE Ensembles consist of 8 SAEs. Means along with 95\% confidence intervals are reported across 5 evaluation runs.}}
\resizebox{0.85\textwidth}{!}{
\begin{tabular}{l>{\centering}p{3cm}>{\centering}p{3cm}>{\centering}p{3cm}>{\centering\arraybackslash}p{4cm}>{\centering\arraybackslash}p{3.2cm}}
\toprule
\rebuttal{\textbf{}} &
\rebuttal{\textbf{RAVEL Score ($\uparrow$)}}  &
\rebuttal{\textbf{CE Loss Score ($\uparrow$)}} &
\rebuttal{\textbf{TPP ($\uparrow$)}} &
\rebuttal{\textbf{Mean Full \hspace{4em} Absorption Rate ($\downarrow$)}} \\
\midrule
\rebuttal{Base SAE} &
\rebuttal{0.7355 (0.0021)} &
\rebuttal{0.9780 (0.0004)} &
\rebuttal{0.0958 (0.0162)} &
\rebuttal{0.0060 (0.0001)} \\
\rebuttal{Expanded SAE} &
\rebuttal{0.7468 (0.0042)} &
\rebuttal{0.9780 (0.0002)} &
\rebuttal{\textbf{0.1377 (0.0102)}} &
\rebuttal{0.0023 (0.0007)} \\
\rebuttal{Ensembling (NB)} &
\rebuttal{\textbf{0.7625 (0.0010)}} &
\rebuttal{0.9997 (0.0001)} &
\rebuttal{0.0162 (0.0008)} &
\rebuttal{0.0039 (0.0002)} \\
\rebuttal{Ensembling (Boosting)} &
\rebuttal{0.7520 (0.0002)} &
\rebuttal{\textbf{1.0000 (0.0000)}} &
\rebuttal{0.1168 (0.0070)} &
\rebuttal{\textbf{0.0020 (0.0010)}} \\
\bottomrule
\end{tabular}
}
\label{stab:sae_bench_gemma_2_2b}
\end{supptable}

\section{Implementation Details}
\label{app:implementation_details}
Here we provide additional details about the data, compute, and hyperparameter selection.

\subsection{Dataset and Models}

The Pile dataset~\citep{gao2020pile} (with copyrighted contents removed) used for training the SAEs is a large, diverse, and open-source English text dataset curated specifically for training general-purpose language models. Its diverse components include academic papers (e.g., arXiv, PubMed Central), books (e.g., Books3, BookCorpus2), code (from GitHub), web content (e.g., a filtered version of Common Crawl called Pile-CC, OpenWebText2), and other sources like Wikipedia, Stack Exchange, and subtitles. Beyond training, the Pile also serves as a benchmark for evaluating language models. More recently, the Pile has become the standard dataset for training sparse autoencoders~\citep{bussmann2025learning,cunningham2023sparse,lieberum2024gemma,marks2024sparse,paulo2025sparse}.

All the language models we use have been previously used for training and evaluating sparse autoencoders~\citep{bricken2023monosemanticity, gao2024scaling, lieberum2024gemma,paulo2025sparse}. Supplementary Table \ref{stab:lm_overview} provides additional details on the language models and the corresponding SAE architectures.

\setlength{\tabcolsep}{4.5pt}
\begin{supptable}[h!]
\centering
\caption{Overview of the language models and SAE architectures used for intrinsic evaluation and downstream use cases.}
\begin{tabular}{p{2.6cm}>{\centering}p{1.2cm}>{\centering}p{1.2cm}>{\centering}p{1cm}>{\centering}p{2cm}>{\centering}p{1cm}c}
\toprule
\textbf{Language Model} & \textbf{Num. Params} & \textbf{Num. Layers} & \textbf{Context Size} & \textbf{Activation Dimension} & \textbf{Layer Used} & \textbf{SAE Arch.} \\
\midrule
\multicolumn{3}{l}{\textbf{Intrinsic Evaluation}}    & & & & \\
\midrule
GELU-1L & 3.1M   & 1 & 1024   & 512  & 1   & ReLU    \\
Pythia-160M  & 162.3M   & 12  & 2048 & 768  & 8   & TopK  \\
Gemma 2-2B  & 2.1B    & 26  & 8192  & 2304   & 12 & JumpReLU  \\
\midrule
\multicolumn{3}{l}{\textbf{Downstream Use Cases}}  & & & & \\
\midrule
Pythia-70M & 70.4M   & 6  & 2048 & 512  & 4   & ReLU  \\
\bottomrule

\end{tabular}
\label{stab:lm_overview}
\end{supptable}

\subsection{Training}
\label{app:training}
Our ensembling algorithms are implemented in PyTorch\footnote{\url{https://pytorch.org/}} by adapting the SAELens library.\footnote{\url{https://github.com/jbloomAus/SAELens/tree/main}} The pseudocode for boosting is summarized in Algorithm \ref{alg:boosting_training}. For naive bagging, the training procedure for each SAE in the ensemble is the same as the standard SAE training. All the SAEs and the ensembles are trained on either an A100 GPU with 80GB of memory or an H100 NVL GPU with 93 GB of memory using a batch size of 10000. Supplementary Table~\ref{stab:training_times} shows the time taken for a single experiment run on a single H100 GPU for ensembles with 8 SAEs. It is worth noting that naive bagging can be parallelized across multiple GPUs, bringing down the training time to that of the base SAE when the number of GPUs is equal to the number of SAEs in the ensemble. \rebuttal{Supplementary Table~\ref{stab:infernce_times} shows the inference times for the different SAEs, which is calculated by doing a forward pass through the trained SAE for 10 batches. The inference time for the ensembling approaches is higher than the baselines, but they are essentially the same (around 1 second) for all practical purposes.}

\begin{supptable}[h!]
\centering
\caption{Training times for the base SAE, expanded SAE, and one experiment run for ensembles with 8 SAEs on a single H100 GPU.}
\vspace{0.3em}
\begin{tabular}{lllll}
\toprule
\textbf{}     & \textbf{GELU-1L} & \textbf{Pythia-160M} & \textbf{Gemma 2-2B} & \textbf{Pythia-70M} \\
\midrule
Base SAE  & 3h 2m & 5h 43m & 11h 7m & 21m \\
\rebuttal{Expanded SAE} & \rebuttal{19h 2m} & \rebuttal{1d 5h 29m} & \rebuttal{2d 11h 41m} & \rebuttal{3h 48m} \\
Ensembling (NB) & 1d 0h 16m   & 1d 21h 44m & 3d 16h 56m  & 3h 56m\\
Ensembling (Boosting)      & 1d 8h 26m  & 2d 0h 17m &  5d 5h 26m & 5h 35m \\
\bottomrule
\end{tabular}
\label{stab:training_times}
\end{supptable}

\begin{supptable}[h!]
\centering
\caption{\rebuttal{Inference times (in seconds) for the base SAE, expanded SAE, and ensembling approaches with 8 SAEs on a single A100 GPU. The mean time along with 95\% confidence intervals of a forward pass through the SAEs across 10 batches are reported.}}
\begin{tabular}{lllll}
\toprule
 & \textbf{\rebuttal{GELU-1L}} & \textbf{\rebuttal{Pythia-160M}} & \textbf{\rebuttal{Gemma 2-2B}} & \textbf{\rebuttal{Pythia-70M}} \\
\midrule
\rebuttal{Base SAE} & \rebuttal{0.156 (0.209)} & \rebuttal{0.178 (0.312)} & \rebuttal{0.302 (0.503)} & \rebuttal{0.105 (0.053)} \\
\rebuttal{Expanded SAE} & \rebuttal{0.165 (0.223)} & \rebuttal{0.184 (0.322)} & \rebuttal{0.338 (0.551)} & \rebuttal{0.170 (0.191)} \\
\rebuttal{Ensembling (NB)} & \rebuttal{0.208 (0.212)} & \rebuttal{0.549 (0.301)} & \rebuttal{1.400 (0.515)} & \rebuttal{0.223 (0.228)} \\
\rebuttal{Ensembling (Boosting)} & \rebuttal{0.244 (0.422)} & \rebuttal{0.889 (1.483)} & \rebuttal{1.630 (0.910)} & \rebuttal{0.249 (0.240)} \\
\bottomrule
\end{tabular}
\label{stab:infernce_times}
\end{supptable}

\begin{algorithm}[H]
\label{alg:boosting}
\SetAlgoLined
\DontPrintSemicolon
\KwInput{Training activations $\{\rva^{(n)}\}_{n=1}^N$, learning rate $\alpha$, sparsity coefficient $\lambda$, sparsity norm coefficient $p$, activation function $h(\cdot)$, previous SAEs $[g(\cdot; \rvtheta^{(1)}),...,g(\cdot; \rvtheta^{(j-1)})]$}
\KwOutput{Trained SAE $g(\cdot; \rvtheta^{(j)})$}
\BlankLine
\tcp{Randomly initialize weights}
initialize parameters $\rvtheta^{(j)}$ (i.e. $\Wenc^{(j)}, \benc^{(j)}, \Wdec^{(j)}, \bdec^{(j)}$) \;
initialize $n=0$
\BlankLine
\While{$n<N$}{
\tcp{Determine residual from previous SAEs}
initialize $\ve=$ zeros\_like($\rva^{(n)}$)
\BlankLine
\For{$\ell \in [j-1]$}{
update $\ve \leftarrow \ve + g(\rva^{(n)}-\ve; \rvtheta^{(\ell)})$
}
\tcp{Leftover residual}
set $\vr=\rva^{(n)} - \ve$ \;
\tcp{Determine predicted residual and feature coefficients}
calculate $\hat{\vr}=g(\vr; \rvtheta^{(j)})$ \;
calculate $\vc = h(\Wenc^{(j)} \vr + \benc^{(j)})$ \;
\tcp{Calculate loss}
set $\cL_{\text{Boost}}\left(\rva^{(n)}; \rvtheta^{(j)} \right) = \norm{2}{\rvr - \hat{\rvr}}^2 + \lambda 
 \norm{p}{\vc}$ \;
 \tcp{Gradient step}
 update $\rvtheta^{(j)}\leftarrow \theta^{(j)}-\alpha\nabla_{\rvtheta^{(j)}}\cL_{\text{Boost}}\left(\rva^{(n)}; \rvtheta^{(j)} \right)$\;
 \tcp{update $n$}
 update $n\leftarrow n+1$
}
\caption{Training algorithm for the $j$th iteration of boosting. Gradient descent with a mini-batch size of 1 is shown as an illustration.}
\label{alg:boosting_training}
\end{algorithm}

\subsection{Hyperparameter Selection}
For the smallest model (GELU-1L), we conduct an extensive hyperparameter search across the learning rate, sparsity coefficient, and the expansion factor (Supplementary Figure \ref{sfig:gelu_hyperparam}), where the expansion factor refers to the multiplicative factor for the input activation dimensionality to get the SAE's hidden dimensionality ($k = d \times \text{Expansion Factor}$). We select the hyperparameters that get closest to 90\% explained variance while having the smallest L0 to ensure that the reconstructions are faithful to the original activations and the SAE decompositions are sparse.

For the larger Pythia-160M and Gemma 2-2B, we use the same learning rate from GELU-1L and consider expansion factors which give SAEs with a similar dimensionality ($k$) as the SAE for GELU-1L. We perform a sweep over the hyperparameter which controls the sparsity of the SAE (TopK value for Pythia-160M and the L0 coefficient for Gemma 2-2B) and select the values that give us an explained variance closest to 90\% (Supplementary Figure \ref{sfig:pythia_gemma_hyperparam}). For the Pythia-70M model used in the downstream tasks, the hyperparameter values were borrowed from prior work~\citep{karvonen2024evaluating, marks2024sparse}. The final selected hyperparameter values are provided in Supplementary Table \ref{stab:hyperparam_selections}.

\begin{suppfigure}[h!]
\centering
\includegraphics[width=\textwidth]{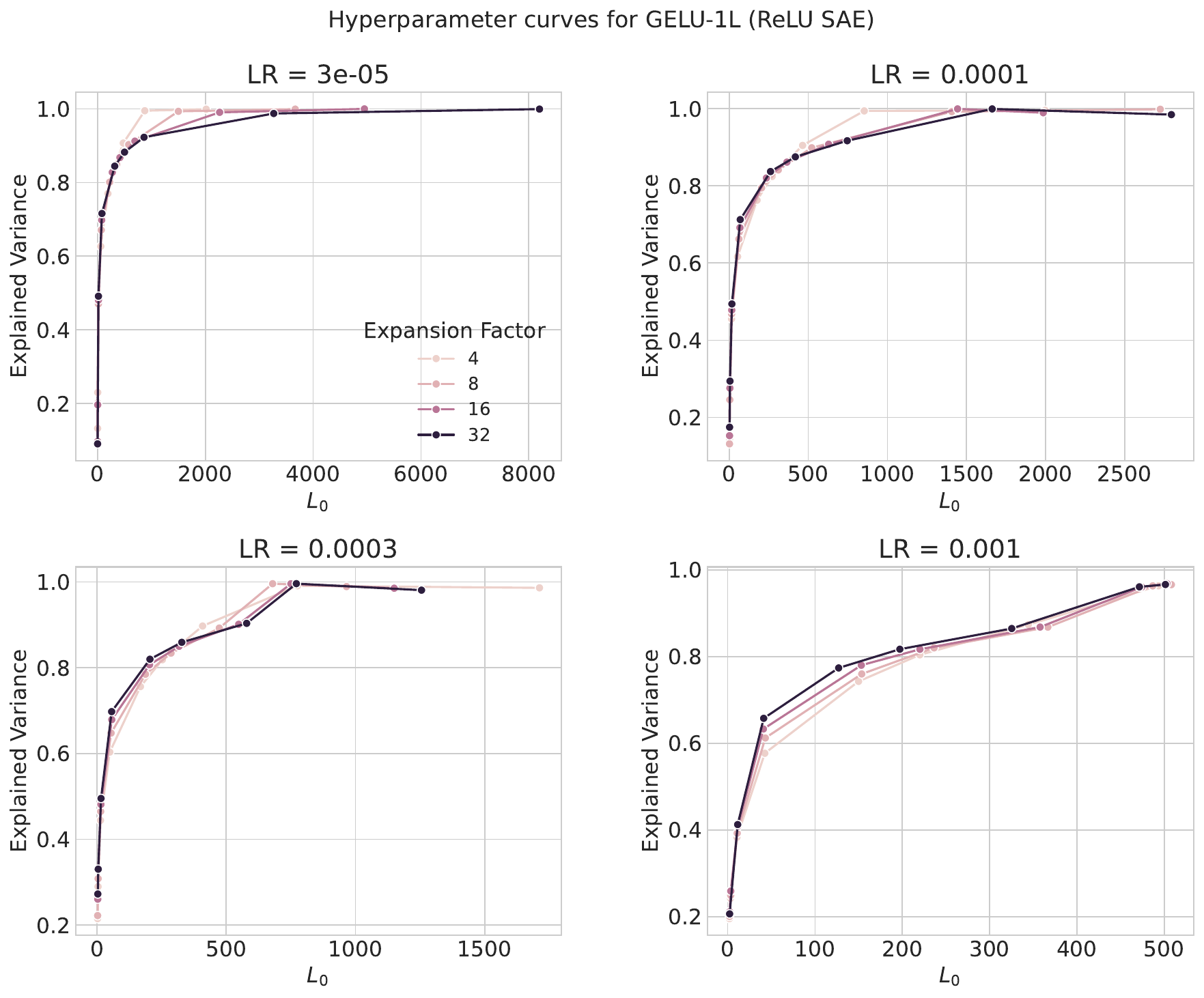}
\caption{Hyperparameter sweep performed for the GELU-1L activations with the ReLU SAE across different learning rates, expansion factors, and sparsity coefficients.
} 
\label{sfig:gelu_hyperparam}
\end{suppfigure}

\begin{suppfigure}[t!]
\centering
\includegraphics[width=\textwidth]{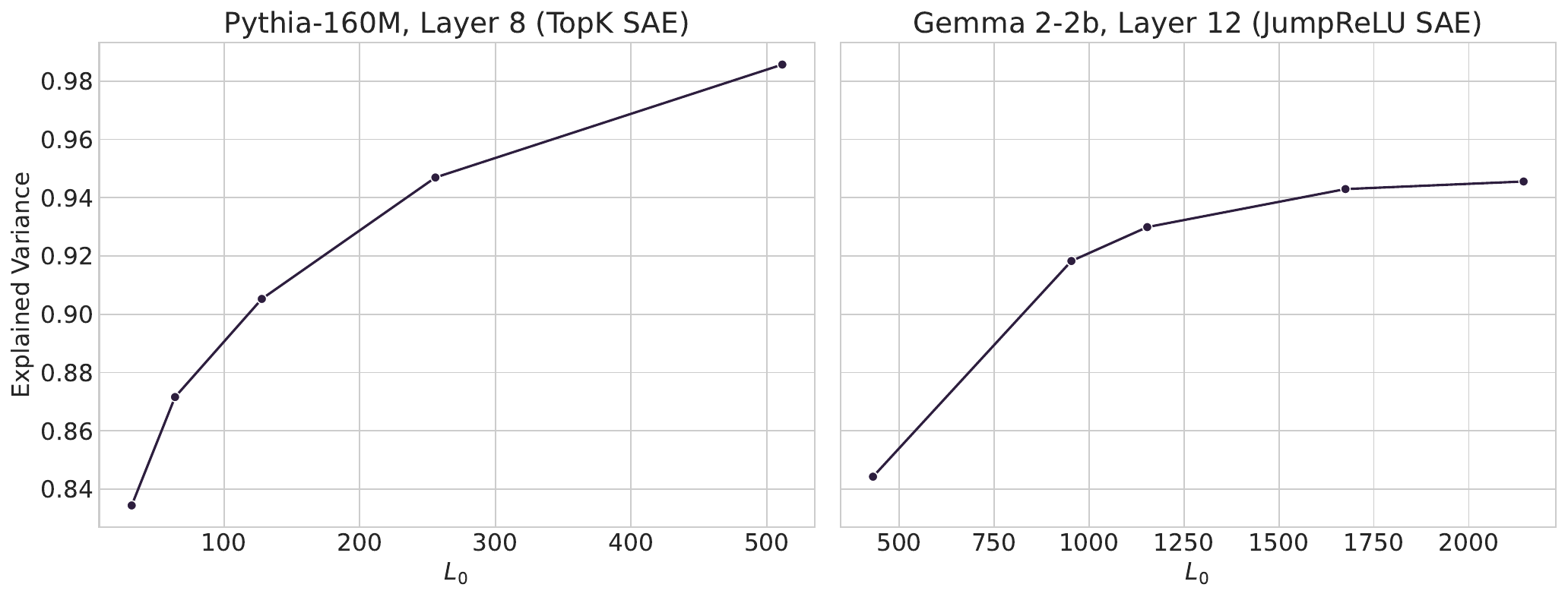}
\caption{Hyperparameter sweep performed for the Pythia-160M and Gemma 2-2B activations with the TopK and JumpReLU SAEs, respectively. For Pythia-160M, the sweep is across different values of $K$, 
 and for Gemma 2-2B it is across different sparsity coefficients.
} 
\label{sfig:pythia_gemma_hyperparam}
\end{suppfigure}

\setlength{\tabcolsep}{4.5pt}
\begin{supptable}[t]
\centering
\caption{Selected hyperparameter values for the base SAE. These hyperparameters are held constant for all SAEs in the ensemble.}
\begin{tabular}{lcccc}
\toprule
\textbf{Language Model} & \textbf{Learning Rate} & \textbf{Expansion Factor} & \textbf{TopK} & \textbf{Sparsity Coefficient} \\
\midrule
\multicolumn{2}{l}{\textbf{Intrinsic Evaluation}}    & & &   \\
\midrule
GELU-1L & 0.0003   & 32 & --   & 0.75     \\
Pythia-160M  & 0.0003   & 21  & 128 & --  \\
Gemma 2-2B  & 0.0003   & 7  & --  & 0.75  \\
\midrule
\multicolumn{2}{l}{\textbf{Downstream Use Cases}}    & & &   \\
\midrule
Pythia-70M & 0.0001 & 64 & -- & 0.1 \\
\bottomrule

\end{tabular}
\label{stab:hyperparam_selections}
\end{supptable}

\end{document}